\documentclass[final]{alt2026} 
\usepackage[english,vietnamese]{babel}
\usepackage[utf8]{inputenc}
\usepackage[T5]{fontenc}

\title[Privately Learning Decision lists and Differentially Private Winnow]{Privately Learning Decision Lists and a Differentially Private Winnow}

\usepackage{times}

\usepackage{hyperref}       
\usepackage{url}            
\usepackage{booktabs}       
\usepackage{amsfonts}       
\usepackage{nicefrac}       
\usepackage{microtype}      
\usepackage{xcolor}         

\usepackage{graphicx}
\usepackage{amsmath}

\usepackage{array}
\usepackage{bbm}
\usepackage{enumerate}
\usepackage[shortlabels]{enumitem}

\DeclareMathOperator{\sgn}{sgn}
\usepackage{adjustbox} 
\usepackage{verbatim}

\altauthor{%
 \Name{Mark Bun} \Email{mbun@bu.edu}\\
 \addr Boston University
 \AND
 \Name{William Fang} \Email{wfang@bu.edu}\\
 \addr Boston University
}

\begin{document}
\selectlanguage{english}

\maketitle

\begin{abstract}
We give new differentially private algorithms for the classic problems of learning decision lists and large-margin halfspaces in the PAC and online models. In the PAC model, we give a computationally efficient algorithm for learning decision lists with minimal sample overhead over the best non-private algorithms. In the online model, we give a private analog of the influential Winnow algorithm for learning halfspaces with mistake bound polylogarithmic in the dimension and inverse polynomial in the margin. As an application, we describe how to privately learn decision lists in the online model, qualitatively matching state-of-the art non-private guarantees.

\end{abstract}

\section{Introduction}

Differential privacy~\citep{DworkMNS06} is a standard mathematical guarantee of individual-level privacy for algorithmic data analysis tasks, including those that arise in machine learning. In this paper, we give new differentially private learning algorithms, in the canonical PAC and online
learning models, for some of the most fundamental concept classes in computational learning theory: decision lists and large-margin halfspaces


\subsection{Privately PAC Learning Decision Lists}

Our first main contribution concerns privately learning decision lists in the PAC model. The PAC model is the standard statistical learning model for binary classification, in which a learner is presented random examples $(x_1, y_1), \dots, (x_n, y_n)$ where the $x_i$-s are drawn i.i.d. from an arbitrary unknown distribution $\mathcal{D}$ over $\mathcal{X}$, and each $y_i = f(x_i)$ for some unknown target function $f \in \mathcal{C}$. The goal of the learner is to produce a hypothesis $h: \mathcal{X} \to \{-1, 1\}$ that well-approximates $f$ with respect to the distribution $\mathcal{D}$.

A \emph{decision list} is a sequence of ``if-then-else'' rules of the form ``if $\ell_1$ then $b_1$, else if $\ell_2$ then $b_2$, \dots, else $b_r$'' where each $\ell_i$ is a literal and each $b_i$ is a decision bit. 
In his influential paper introducing the model of decision lists, \citet{Riv87b} described a computationally efficient PAC learner for this class. Rivest showed that (arbitrary length) decision lists over $d$ variables are PAC learnable using $\tilde{O}(d)$ samples in $\operatorname{poly}(d)$ time. More generally, if instead of constraining each $\ell_i$ to be a literal, one allows it to be a member of an arbitrary class $\mathcal{F}$ of ``features'', such generalized decision lists are learnable using $\tilde{O}(|\mathcal{F}|)$ samples in $\operatorname{poly}(|\mathcal{F}|)$ time. Rivest's algorithm is a simple and intuitive empirical risk minimization procedure for finding a decision list consistent with a given sample. It proceeds by building a decision list one term at a time by just choosing any candidate term that is consistent with the portion of the sample that has not yet been correctly classified.


In practice, decision lists are widely deployed in high-stakes domains such as healthcare (where they are used to provide risk assessments) and finance. Even in the era of modern machine learning, they hold appeal for simultaneously providing expressivity, efficient learnability, and human interpretability~\citep{RudinCCHSZ22}. However, their compatibility with privacy constraints remains unexplored -- a critical gap for sensitive applications. Furthermore, decision lists represent a computational frontier: non-privately, they are among the most expressive natural classes known to be efficiently PAC-learnable, yet seemingly slightly more general classes (e.g., poly-size DNFs) have resisted the design of efficient algorithms for decades. This raises a fundamental question: Does adding privacy constraints undermine the efficient learnability of such frontier classes?


We address this question for decision lists by giving a differentially private analog of Rivest's algorithm achieving essentially the same guarantees.

\begin{theorem}[Informal Statement of Theorem~\ref{theorem: zCDP DP-GreedyCover}]
Let $\mathcal{F}$ be a set of ``features'' and let $\mathcal{F}$-decision lists denote the class of decision lists where each conditional rule is a member of $\mathcal{F}$. Then there is an $(\varepsilon, \delta)$-differentially private learner for $\mathcal{F}$-decision lists using $\tilde{O}(|\mathcal{F}| \cdot \log(1/\delta)/\varepsilon)$ samples and running in $\operatorname{poly}(|\mathcal{F}|)$ time.
\end{theorem}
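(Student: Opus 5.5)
The plan is to privatize Rivest's greedy procedure. The reference point is the connection between decision lists and greedy set cover, suggested by the name DP-GreedyCover. Rivest's learner repeatedly picks a feature $f \in \mathcal{F}$ together with a bit $b$ such that every uncovered example satisfying $f$ has label $b$. It appends $(f,b)$ to the list and removes the covered examples.

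The privatization works in two stages.

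\textbf{Reformulation as a score.} For each candidate pair $(f,b)$ I will count the uncovered examples that satisfy $f$ but have label $\ne b$; this is the error count $\mathrm{err}(f,b)$. The pair is a consistent candidate when this count is $0$.

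\textbf{Private selection.} At each of at most $r \le 2|\mathcal{F}|$ rounds, I select a pair with approximately minimal error using the exponential mechanism or report-noisy-min. Each feature is used at most once, so there are at most $|\mathcal{F}|$ rounds. The score has sensitivity $1$ in the dataset, given the current list, because the set of uncovered examples is determined by previously released choices. The consistent pair from the target list always has error $0$. Hence, with high probability, the selected pair has error $O(\log(|\mathcal{F}|/\beta)/\rho')$, where $\rho'$ is the per-round privacy budget.

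\textbf{Privacy accounting.} I will work in zCDP. Each round is $\rho'$-zCDP, via the exponential mechanism or Gaussian noisy-min. Adaptive composition over $r$ rounds gives $r\rho'$-zCDP. I then convert to $(\varepsilon,\delta)$-DP with $\rho \approx \varepsilon^2/\log(1/\delta)$. This is the source of the $|\mathcal{F}|\log(1/\delta)/\varepsilon$-type dependence, since the total slack below scales like $r\sqrt{r\log(1/\delta)}/\varepsilon$ or, with a sharper analysis, $|\mathcal{F}|\log(1/\delta)/\varepsilon$.

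\textbf{Generalization, and the main obstacle.} The learned list is no longer consistent: each round may misclassify up to $\tau = \tilde O(\sqrt{r\log(1/\delta)}/\varepsilon)$ examples. The total training error is therefore at most $r\tau$. Since $\mathcal{F}$-decision lists have VC dimension $O(|\mathcal{F}|\log|\mathcal{F}|)$, uniform convergence with an agnostic-style (or relative-deviation) bound turns empirical error $r\tau/n$ into population error $\alpha$ once $n = \tilde O(|\mathcal{F}| + r\tau)/\alpha$.

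Achieving the stated linear dependence in $|\mathcal{F}|\log(1/\delta)/\varepsilon$, rather than $|\mathcal{F}|^{3/2}$, is the step I expect to be the main obstacle. I would try the following refinements.
\begin{itemize}
\item Stop greedily, as in a sparse-vector style, once fewer than about $\alpha n$ examples remain uncovered, and output the default bit.
\item Charge privacy only to rounds that cover many points, or spend budget per removed feature, via a sparse vector or "above threshold" test of whether any consistent feature still covers many examples.
\end{itemize}
The accounting would then be $\tilde O(|\mathcal{F}|)$ rounds of noisy selection among features whose coverage exceeds the noise threshold. The error charged per round is then proportional to the mass removed, so the error only needs to be small relative to the coverage, not in absolute terms. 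Finally, I will verify the polynomial running time. There are $O(|\mathcal{F}|)$ rounds, each computing $2|\mathcal{F}|$ scores over $n$ examples.
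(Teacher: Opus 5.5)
\textbf{There is a genuine gap in the privacy accounting.} Your algorithm matches the paper's: an exponential-mechanism choice of $(f,b)$ scored by the number of remaining examples on which $f$ fires with label $1-b$. Your relative-deviation VC step also matches. The missing piece is the privacy accounting, which is the entire difficulty.

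Adaptive composition of $\Theta(|\mathcal{F}|)$ selections, in zCDP or otherwise, forces a per-round parameter $\varepsilon_0\approx\varepsilon/\sqrt{|\mathcal{F}|\log(1/\delta)}$. That gives per-round error $\tilde O(\sqrt{|\mathcal{F}|\log(1/\delta)}/\varepsilon)$ and sample complexity $\tilde O(|\mathcal{F}|^{3/2}\sqrt{\log(1/\delta)}/(\alpha\varepsilon))$, which is not the claimed bound. You allude to a ``sharper analysis'' but do not give one.

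Your proposed refinements do not supply it, because none of them reduces the number of data-dependent selections that composition charges for. Take $\mathcal{F}$ to be $|\mathcal{F}|$ features with disjoint supports of mass $1/|\mathcal{F}|$ each, with the target constant on each support and labels balanced across supports.
\begin{itemize}
\item Reaching error $\alpha<1/2$ then requires $\Theta(|\mathcal{F}|)$ selections, even with early stopping.
\item At the target sample size, each support holds only about $\log(1/\delta)/(\alpha\varepsilon)$ examples, up to logarithmic factors.
\item Once $|\mathcal{F}|$ is large compared with $\log(1/\delta)/\alpha^2$, this is far below the per-round guarantee $O(\log(|\mathcal{F}|/\beta)/\varepsilon_0)$. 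So the composition-based analysis certifies nothing on this instance.
\item A sparse-vector gate set at the noise level would never fire on any of these supports.
\end{itemize}

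\textbf{The missing idea is the set-cover privacy analysis} (Lemma~\ref{lemma:Quality constraint implies DP}, via the martingale bound of Lemma~\ref{lemma:random-process}). It uses a structural property stronger than sensitivity $1$. Say $(f,b)$ covers $z=(x,\sigma)$ when $f(x)=1$.
\begin{itemize}
\item Adding $z$ changes the score of $(f,b)$ by at most $1$ if $(f,b)$ covers $z$, and not at all otherwise.
\item $z$ leaves the working sample as soon as a selected $f_j$ covers it. After that, both executions coincide.
\end{itemize}
Fix an output sequence in which $z$ is first covered at round $t$, and let $p_i$ be the conditional probability that round $i$ picks a cover of $z$. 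The likelihood ratio of this sequence under the two neighboring inputs is at most
$e^{\varepsilon}\prod_{i\le t}\bigl(1+(e^{\varepsilon}-1)p_i\bigr)\le\exp\bigl(\varepsilon+2\varepsilon\sum_{i\le t}p_i\bigr)$.
Let $N_i$ indicate that $z$ is still uncovered after round $i$. The coin-tossing lemma shows $\sum_i p_iN_i$ exceeds $\ln(1/\delta)$ with probability at most $\delta$.

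Consequently, using the \emph{same} per-round parameter $\hat\varepsilon=\varepsilon/(2(\ln(1/\delta)+3/2))$ in all $M$ rounds is $(\varepsilon,\delta)$-DP overall, with no $\sqrt{M}$ loss. The per-round error is then $O(\log(M/\beta)\log(1/\delta)/\varepsilon)$. The total empirical error $O(M\log(M/\beta)\log(1/\delta)/\varepsilon)$ gives the stated sample complexity through the realizable VC bound.
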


\paragraph{Techniques.} A baseline strategy for privately learning any finite concept class is to use the Exponential Mechanism~\citep{McSherryT07} to perform approximate empirical risk minimization~\citep{KasiviswanathanLNRS08}. This works to learn $\mathcal{F}$-decision lists using $\tilde{O}(|\mathcal{F}|)$ samples, but the runtime of a straightforward implementation is exponential in this quantity.

To design a computationally efficient algorithm, we follow Rivest's iterative algorithm, but instead of choosing each new term deterministically, we use the Exponential Mechanism to select a term that has low classification error with respect to the portion of the sample not yet correctly classified. This gives an efficient algorithm, but a na\"{\i}ve privacy analysis based on standard composition theorems yield a learner that seems to require $\tilde{O}(|\mathcal{F}|^{3/2})$ samples. Fortunately, we show that the structure of our iterative Exponential Mechanism-based learner follows the template of earlier approximate Set Cover algorithms~\citep{gupta2009differentially, DBLP:journals/corr/abs-1902-05017} for which a miraculously sharper privacy analysis is known.

\subsection{Private Online Learning of Halfspaces via Winnow}

For our second main contribution, we turn our attention to the more challenging \emph{online mistake-bound} model of learning~\citep{Littlestone88}. This model captures learning from examples under minimal assumptions about how those examples are generated. When applied to binary classification, learning is modeled as a sequential $T$-round game between a learner and an adversary, where in each round $t$, the adversary presents an example $x_t \in \mathcal{X}$, the learner responds with a hypothesis $h_t: \mathcal{X} \to \{-1, 1\}$ and then the adversary reveals the true label $y_t \in \{-1, 1\}$. The learner's goal is produce hypotheses that are competitive in accuracy (as measured by the number of mistakes $h_t(x_t) \ne y_t$) with the best fixed function from a known class $\mathcal{C}$ of Boolean functions $f : \mathcal{X} \to \{-1, 1\}$. In the \emph{realizable} case of online learning, true labels are constrained to be completely consistent with some $f \in \mathcal{C}$ and an ideal learner may aim to predict $f$ with perfect accuracy after making a bounded number of mistakes.

A differentially private online learner is one whose entire sequence of hypotheses $h_t, \dots, h_T$ is differentially private with respect to changing any labeled example $(x_t, y_t)$. Most work on private online learning has focused on the problems of prediction from expert advice and online convex optimization more generally~\citep{DworkNPR10, JainKT12, SmithT13, JainT14, AgarwalS17, Kairouz21b, AsiFKT23, asi2023nearoptimal}. Recently, ~\citet{GolowichLivni2021} honed in on the setting of binary classification described above, giving sublinear (in $T$) mistake bounds for learning any class of finite Littlestone dimension. Here, the Littlestone dimension is a combinatorial parameter that precisely characterizes non-private mistake bound learnability. Golowich and Livni's algorithm is a (technically sophisticated) analog of Littlestone's ``Standard Optimal Algorithm'' for non-private online learning~\citep{Littlestone88}, enhanced with techniques previously used to establish the private PAC learnability of these classes.

With the goal of designing a private online learner for decision lists, we actually tackle the more general problem of learning large-margin halfspaces (a.k.a. linear threshold functions). A halfspace with weight vector $v \in \mathbb{R}^d$, where $\|v\|_1 = 1$, is the function $\sgn(\langle v, x \rangle)$, and we say it has margin $\rho$ if $|\langle v, x\rangle| \ge \rho$ for every $x \in \{-1, 1\}^d$. The other seminal algorithm from Littlestone's original paper, called Winnow, (non-privately) learns a halfspace by maintaining an estimate of the unknown weight vector $v$, updating the weights multiplicatively whenever it makes a prediction error. Littlestone showed that regardless of $T$, the Winnow algorithm is guaranteed to make at most $O(\log d / \rho^2)$ mistakes when learning any $\rho$-margin halfspace. Thus, the algorithm especially shines at attribute-efficient learning, where the ambient dimension $d$ is much larger than the number of variables actually relevant to classification. For example, when learning sparse disjunctions of $k$ out of $d$ variables, for $k \ll d$, the Winnow algorithm described above has mistake bound $O(k^2 \log d)$ with a variant tailored to this class achieving an improved mistake bound of $O(k \log d)$. The Winnow algorithm learns length-$r$ decision lists with mistake bound $2^{O(r)} \log n$, with an improvement to $O(r^{2D} \log n)$ when it is further promised that the decision bits alternate between $+1$ and $-1$ at most $D$ times.

Our second main contribution gives the following differentially private analog of the Winnow algorithm.

\begin{theorem}[Informal Statement of Theorem~\ref{theorem: DP-Winnow privacy and regret}]
Let $\mathcal{C}$ be the class of halfspaces over $\{-1, 1\}$ with margin $\rho$. There is an $(\varepsilon, \delta)$-differentially private online learner for $\mathcal{C}$ incurring regret \\ $O(\operatorname{polylog}(d, T, 1/\delta)/ \rho^6 \varepsilon^4)$ against an oblivious adversary.
\end{theorem}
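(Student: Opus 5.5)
The plan is to make Winnow private by processing mistakes in batches, never updating on individual examples. The private version is built from three parts: the noisy-counting templates of private online learning (\citet{GolowichLivni2021}), the known robustness of multiplicative weights to noisy updates, and the observation that Winnow only needs an approximate aggregate direction at each update. The algorithm keeps a weight vector $w_t$ on the simplex, extended to $2d$ coordinates to handle signs, and predicts $\sgn(\langle w_t, x\rangle)$. A single mistake does not trigger an update. Instead, the algorithm collects mistakes until a batch of size $B$ is complete. It then privately releases the averaged mistake-gradient $g=\frac1B\sum_{i\in\text{batch}} y_i x_i$ with Gaussian noise of scale $\sigma \approx \sqrt{\log(1/\delta)}/(B\varepsilon')$ per coordinate, and performs the multiplicative update $w \propto w\cdot \exp(\eta \tilde g)$. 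Batch boundaries depend on the data, since they count mistakes. To handle this privately, the mistake counter is maintained with a sparse-vector or AboveThreshold test on a noisy count. Each example then affects only one batch and one threshold query.

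Privacy is argued first. Under an oblivious adversary the hypotheses are a post-processing of the released noisy aggregates and the sparse-vector outputs. Changing one example changes one batch average by $O(1/B)$ in $\ell_\infty$. A Gaussian mechanism over $2d$ coordinates has $\ell_2$ sensitivity $\sqrt{d}/B$, which costs polynomially in $d$, so this must be avoided. I would instead release only the quantity Winnow needs in order to make progress: the exponentiated update itself, produced by an exponential-mechanism (or report-noisy-max) sample of a coordinate drawn proportional to $w_j\exp(\eta B g_j)$. This is equivalent to sampling from the updated distribution. Its sensitivity in the exponent is $\eta$, which gives $\varepsilon$-DP per batch without dependence on $d$. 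Concretely, I would maintain the hypothesis as an empirical average of $m$ sampled coordinates, i.e.\ a sparse halfspace. That is a private, dimension-free "sampled Winnow" whose cost is only $\log d$ in accuracy. Over the $K$ batches I would compose by advanced composition, or with zCDP, to get total privacy $\varepsilon$ using $\varepsilon' \approx \varepsilon/\sqrt{K m\log(1/\delta)}$.

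The utility analysis follows the standard Winnow potential argument, with $\Phi=\mathrm{KL}(v\|w)$, where $v$ is the $\rho$-margin target. Each batch of $B$ genuine mistakes satisfies $\langle v, g\rangle\ge\rho$. The exact update would decrease $\Phi$ by at least $\eta\rho-\eta^2$. The sampling and noise error perturbs this by $O(\eta\sqrt{\log(dT)/m})$, and I will make that at most $\eta\rho/2$ by taking $m\approx\log(dT)/\rho^2$. With $\Phi_0\le\log(2d)$, the number of batches is $K=O(\log d/\rho^2)$. There is one more requirement: a mistake must mean that the current sampled hypothesis errs while $v$ does not. This uses the margin, and it holds for the sampled hypothesis with high probability by a union bound over the $T$ points, giving the $\log T$ factor. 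The total mistake count is $KB$ plus the sparse-vector slack. Privacy requires the exponential-mechanism temperature to satisfy $\eta\le\varepsilon'$, which forces $B\gtrsim 1/(\rho\varepsilon')$ so that the drift per batch stays $\Omega(\rho)$ while $\eta B$ remains the effective step. Substituting $\varepsilon'$ and $K$ gives $B\approx\sqrt{K m}\,\mathrm{polylog}/(\rho\varepsilon)$, and therefore $KB = \mathrm{poly}(1/\rho,1/\varepsilon)\cdot\mathrm{polylog}(d,T,1/\delta)$. I expect the bookkeeping to land at $1/(\rho^6\varepsilon^4)$ once the sparse-vector threshold noise, of order $\log T/\varepsilon$ per batch, and its composition are included.

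The main obstacle is the interaction between the data-dependent batching and privacy. The set of mistakes depends on previous private outputs and on the current example, so one example can change which examples land in later batches. I would handle this by defining mistakes relative to the already-released hypothesis, which is fixed for the whole batch. Then a neighboring dataset changes only one batch's contents and one threshold crossing. The privacy argument becomes adaptive composition of $K$ AboveThreshold instances and $Km$ exponential mechanisms, with the batch sums treated as low-sensitivity queries. Once that is formalized, the utility bookkeeping is routine.
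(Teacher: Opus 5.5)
Your proposal is correct in outline, but it takes a genuinely different route from the paper at the update step.

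\textbf{What you share with the paper.} AboveThreshold hides the update times. The released hypothesis is the empirical distribution of $m$ exponential-mechanism draws from the shadow multiplicative-weights vector $w$. Privacy comes from composing at most $K$ AboveThreshold instances with $Km$ draws. Utility rests on the fact that, with $m$ of order $\log(T/\beta)/\rho^2$ and the rounds fixed in advance, every mistake of $\tilde w$ has $y\langle w,x\rangle<\rho/2$. Your ``perturbation by $O(\eta\sqrt{\log(dT)/m})$'' is exactly the paper's sampling lemma feeding $\mathsf{ConfidentWinnow}$.

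\textbf{Where you differ.} When AboveThreshold fires, the paper updates on the first cached mistake only. Its learning rate is therefore itself the per-draw privacy parameter, $\eta\approx\varepsilon/\sqrt{mK\log(1/\delta)}$. Then $K\approx\log d/(\eta\rho)$ grows to order $1/(\varepsilon^2\rho^4)$, and each update also wastes $L\approx\log T\sqrt{K\log(1/\delta)}/\varepsilon$ mistakes; this is the source of $\rho^{-6}\varepsilon^{-4}$. You fold all mistakes in the window into one step, which decouples learning rate from privacy cost. Take $g^{(b)}$ to be the batch average, $w\propto w\,e^{\eta g^{(b)}}$, and $\eta=\rho/2$. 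Hoeffding's lemma together with $\langle v,g^{(b)}\rangle\ge\rho$ and $\langle w,g^{(b)}\rangle<\rho/2$ gives a potential drop of at least $\rho^2/8$ per batch. So $K=O(\log d/\rho^2)$ independently of $\varepsilon$, while each draw costs only $O(\eta/B)$ in privacy.

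Carried through, this gives on the order of $\log^{3/2}d\cdot\log(T/\beta)\sqrt{\log(1/\delta)}/(\varepsilon\rho^3)$ mistakes, which implies the statement. Your expectation that the bookkeeping ``lands at'' $\rho^{-6}\varepsilon^{-4}$ is unfounded. Even your loose choice of $B$ of order $1/(\rho\varepsilon')$ gives about $\rho^{-5}\varepsilon^{-1}$ up to polylogarithmic factors. The paper's route, in exchange, lets the one-example $\mathsf{ConfidentWinnow}$ lemma be applied verbatim.

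\textbf{What you must fix before this is a proof.}

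\emph{Parametrization.} Your two parametrizations ($\eta\tilde g$ versus $\eta Bg$) do not match each other, and neither matches the constraint ``$\eta\le\varepsilon'$, hence $B$ of order $1/(\rho\varepsilon')$''. Pick one and read the constraint off the actual sensitivity.

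\emph{Deterministic privacy.} The privacy guarantee must hold deterministically, but AboveThreshold lower-bounds the batch size only with high probability. With the averaged update, divide by a public quantity (say the threshold, capping the batch) rather than the realized batch size. Alternatively, use the unnormalized sum $\eta\sum_{t\in M_b}y_tx_t$, whose exponent sensitivity is $2\eta$ always, with $\eta\approx\rho/(L+\alpha)$; this is below the privacy cap $\varepsilon/\sqrt{mK\log(1/\delta)}$.

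\emph{Private state.} Because $w$ is private state, ``each example affects only one batch'' is not what justifies composing over all $Km$ draws: every later draw depends on the example through $w$. The right invariant is the paper's View 2. Conditioned on the released history, the windows and mistake sets are determined, so the cumulative exponent behind every subsequent draw differs between neighboring sequences in a single batch term.
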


Our differentially private Winnow has the same qualitative behavior (polylogarithmic in $d$ and inverse polynomial in $\rho$) as its non-private counterpart, and thus similar behavior as Winnow for sparse disjunctions and short decision lists. Note that an explicit dependence of $\log T$ on the time horizon is generally necessary for differentially private online learning, even for the simplest of concept classes~\citep{Cohen24, Dmitriev24}. Section~\ref{section:lower} discusses how this lower bound applies to our setting.

\paragraph{Techniques.} In a bit more detail, the non-private Winnow algorithm maintains as its hypothesis the halfspace $\sgn\langle w, \cdot \rangle$ where $w$ is an estimate of the unknown weight vector $v$. On examples where it predicts correctly, it reuses the $w$ it reported in the previous round. On examples $(x, y)$ where it predicts incorrectly, it multiplies each coordinate $w_j$ by a factor of $e^{\eta y x_j}$ for some learning rate $\eta$ so as to push $w$ in the direction of $v$ in terms of its prediction on $x$.

The learned weight vectors $w$ directly encode the given examples, so our private analog of Winnow releases randomized approximations of these instead. The first idea toward this comes from a beautiful observation from work using multiplicative weights updates for private query release~\citep{GaboardiGHRW14}: A random sample from a (suitably defined) multiplicative weights distribution may be viewed as an instantiation of the differentially private Exponential Mechanism~\citep{McSherryT07}. Thus, we can obtain a differentially private estimate of $w$ by approximating it from a small number of random samples. This idea was also used more recently by~\citet{asi2023nearoptimal} to give a private online experts algorithm.

There are two main challenges in turning this idea into a private online learner. The first is in ensuring that these approximate weight vectors are still suitable for learning. To argue that this is the case, we introduce a new (non-private) variant of the Winnow algorithm called $\mathsf{ConfidentWinnow}$ which is only guaranteed to stay with its current weight vector $w$ when its prediction is highly confident and correct, i.e., $y \langle w, x \rangle > c\rho$ and $\sgn \langle w, x\rangle = y$ for some constant $c$. These confident correct predictions -- ones where the inner product is large and has the right sign -- are robust to random sampling, in the sense that a small random sample from $w$ will, with high probability, yield the same prediction. We show that even in the face of an adversary who selects the rounds in which $\mathsf{ConfidentWinnow}$ updates its weights when it fails to make confident predictions, this algorithm updates not much more often than the original Winnow algorithm would have. This observation that we can demand the Winnow algorithm to make confident predictions without much overhead may be useful in contexts outside of differential privacy.

The second challenge is in ensuring privacy of the identities of the rounds in which an update occurs. That is, it is possible for neighboring sequences of examples to yield very different sequences of update rounds. To handle this, instead of deterministically updating in each round in which a prediction error occurs, we use the Sparse Vector technique~\citep{DworkR14} to update only after a small, privately identifiable number of errors has occurred. The resulting algorithm (Algorithm~\ref{alg: DP-WinnowHalfSpace}), which interleaves applications of the Sparse Vector technique with the sampling procedure described above while maintaining private internal state, is unwieldy to perform a privacy analysis of directly. Fortunately, since it is possible to reconstruct the private weight vector from just the observed examples and the publicly released predictions, one can analyze the algorithm as though it were just a sequential composition of these standard private components.

\paragraph{Related Work.} To our knowledge, our work is the first to explicitly study the problems of privately learning large-margin halfspaces or decision lists in the online model. Nevertheless, one could replace the $0$-$1$ loss with a convex surrogate (e.g., hinge loss) to make these problems amenable to general techniques for private online convex optimization~\citep{JainKT12, SmithT13, JainT14, AgarwalS17, Kairouz21b, AsiFKT23, asi2023nearoptimal}. These techniques, however, incur regret bounds that scale polynomially in the dimension $d$, whereas our target is to design algorithms with only a polylogarithmic dependence. Moreover, most early work on private online learning focused on the general non-realizable case, where a polynomial dependence on $T$ is necessary, and which cannot immediately be improved to $\operatorname{polylog}(T)$ under a realizability assumption.

Several more recent papers have studied the realizable setting more specifically. The aforementioned work of~\citet{GolowichLivni2021} showed that private online classifiability, in the realizable setting, is characterized by the Littlestone dimension of the concept class. Recent work of~\citet{Lyu25} showed that the dependence of the mistake bound on the Littlestone dimension can be improved significantly, from doubly exponential to polynomial, while preserving the logarithmic dependence on the time horizon $T$. Complementary work of~\citet{LiWY25a} developed improved private online learning algorithms for Littlestone classes against adaptive adversaries, and in the agnostic learning setting. As for lower bounds,~\citet{Cohen24} proved that mistake bound $\Omega(\log T)$ is necessary to privately learn point functions;~\cite{Dmitriev24} proved a similar result against a restricted class of learners. \citet{LiWY25b} established lower bounds for more general concept classes under pure differential privacy and under approximate differential privacy with small $\delta$.


\citet{asi2023nearoptimal} designed a private algorithm for learning from $d$ experts incurring $\operatorname{polylog}(d, T)$ regret under the realizability assumption that there is a zero-loss expert. They extended this to realizable online convex optimization by building a cover over the concept class, and interpreting each element of the cover as an expert. This immediately gives a $\operatorname{poly}(k, \log d, \log T)$-regret algorithm for the problem of learning $k$-sparse disjunctions described above, but does not give polylogarithmic regret for general halfspaces, as the size of the required cover is too large. 

The multiplicative weights technique is pervasive in differentially private algorithm design. In particular, the private multiplicative weights update is prevalent in the design of query release and synthetic data generation algorithms~\citep{HardtR10, HardtLM12, HsuRU13, GaboardiAHRW14,  GhaziGKKKMS25} and differentially private boosting for PAC learning~\citep{DworkRV10, BunCS20}.


\selectlanguage{vietnamese}
Finally, dimension-independent algorithms for private learning large-margin halfspaces in the PAC model are known via dimensionality reduction~\citep{NguyenUZ20} and boosting~\citep{BunCS20}, though these results do not have direct consequences for the online setting.

\selectlanguage{english}


\section{Preliminaries}

\paragraph{Decision Lists.} A monotone decision list of length $r$ is sequence of the form $(\ell_1, b_1), (\ell_2, b_2), \dots \\ (\ell_{r-1}, b_{r-1}), b_r$ where each $\ell_i$ is a variable and each $b_i \in \{-1, 1\}$. It computes the function ``if $\ell_1$ then $b_1$, else if $\ell_2$ then $b_2$, $\dots$, else $b_r$.'' For a class of Boolean functions $\mathcal{F}$, an $\mathcal{F}$-decision list is defined in the same way except each $\ell_i$ may be a function in $\mathcal{F}$. If the constant ``true'' function $T$ is included in $\mathcal{F}$, we can replace the final ``else'' rule with $(T, b_r)$ to allow for more concise notation.

\paragraph{Large-Margin Halfspaces.}
For a margin parameter $\rho > 0$, a $\rho$-margin halfspace is specified by a weight vector $v \in \mathbb{R}^d$ with $\|v\|_1=1$. It computes the function $\operatorname{sgn}(\langle v, x\rangle)$ and satisfies $|\langle v, x\rangle| \ge \rho$ for all $x \in \{-1, 1\}^d$.

\paragraph{Exponential Mechanism.} Let $q : \mathcal{Z}^n \times H \to \mathbb{R}$ be a sensitivity-1 score function, in the sense that $|q(S, h) - q(S', h)| \le 1$ for all neigboring datasets $S, S'$. The Exponential Mechanism $\mathcal{M}_E(S, q, \varepsilon)$ of~\citet{McSherryT07} identifies an $h \in H$ which approximately maximizes $q(S, \cdot)$ by sampling each $h$ with probability proportional to $\exp(-\varepsilon q(S, h)/2)$.
\begin{proposition}\label{proposition: Exponential Mechanism Utility} The Exponential Mechanism $\mathcal{M}_{E}$ is $\varepsilon$-differentially private. Moreover, for every $\tau > 0$, the Exponential Mechanism 
 outputs a solution $h$ that satisfies $q(S,h) \ge \max_{f\in H}(q(S,f)) - \frac{2}{\varepsilon}(\log(|H|)+ \tau)$ with probability at least $1-e^{-\tau}$ for any deviation term $\tau>0$.
\end{proposition}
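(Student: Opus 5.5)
We prove the two claims of Proposition~\ref{proposition: Exponential Mechanism Utility} separately, directly from the sampling distribution. Throughout, we read the mechanism as sampling $h$ with probability $p_S(h) = \exp(\varepsilon q(S,h)/2)/Z(S)$, where $Z(S) = \sum_{f \in H} \exp(\varepsilon q(S,f)/2)$. This is the sign convention under which the utility statement (approximate \emph{maximization} of $q$) is meaningful. The argument for the opposite sign, applied to $-q$, is identical.

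\textbf{Privacy.} Fix neighboring $S, S'$ and an output $h$, and write
\[
\frac{p_S(h)}{p_{S'}(h)} = \exp\!\Big(\tfrac{\varepsilon}{2}\big(q(S,h)-q(S',h)\big)\Big)\cdot \frac{Z(S')}{Z(S)}.
\]
By sensitivity $1$, the first factor is at most $e^{\varepsilon/2}$. For the second factor, each summand of $Z(S')$ is at most $e^{\varepsilon/2}$ times the corresponding summand of $Z(S)$, again by sensitivity, so $Z(S')/Z(S)\le e^{\varepsilon/2}$. Hence $p_S(h)\le e^{\varepsilon}p_{S'}(h)$ pointwise. Summing over any event $E\subseteq H$ gives $\Pr[\mathcal{M}_E(S)\in E]\le e^\varepsilon \Pr[\mathcal{M}_E(S')\in E]$, which is pure $\varepsilon$-DP.

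\textbf{Utility.} Let $\mathrm{OPT}=\max_f q(S,f)$ and define the bad set $B=\{h: q(S,h) < \mathrm{OPT} - t\}$ with $t=\frac{2}{\varepsilon}(\log|H|+\tau)$.
\begin{enumerate}[(i)]
\item Lower-bound the normalizer by a single term: $Z(S)\ge \exp(\varepsilon\,\mathrm{OPT}/2)$, using the maximizer.
\item Upper-bound the unnormalized mass of $B$: it is at most $|H|\exp(\varepsilon(\mathrm{OPT}-t)/2)$, since $B$ has at most $|H|$ elements, each with score below $\mathrm{OPT}-t$.
\item Divide: $\Pr[h\in B]\le |H|e^{-\varepsilon t/2} = |H|\cdot e^{-\log|H| - \tau} = e^{-\tau}$.
\end{enumerate}
So with probability at least $1-e^{-\tau}$ the output satisfies the stated bound.

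There is no real obstacle here. The only step requiring care is the normalizer ratio in the privacy argument. One has to observe that the sensitivity bound applies termwise, so the partition functions move by at most a factor $e^{\varepsilon/2}$. This factor, combined with the $e^{\varepsilon/2}$ from the numerator, is exactly why the exponent uses $\varepsilon/2$ rather than $\varepsilon$.
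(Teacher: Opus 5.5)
Your proof is correct. The paper gives no proof of its own and simply cites McSherry and Talwar; your argument (the termwise bound $Z(S')\le e^{\varepsilon/2}Z(S)$ for privacy, and comparing the bad set's mass against the single maximizing term for utility) is exactly the standard proof behind that citation. You were also right to use the weight $\exp(\varepsilon q(S,h)/2)$, since the paper's $\exp(-\varepsilon q(S,h)/2)$ is a sign typo: it would approximately minimize $q$, contradicting both the utility claim and the use of $q=-\#_{f\rightarrow 1}(S_j^{1-b})$ in $\mathsf{DP\text{-}GreedyCover}$.
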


\section{Private PAC Learner for Decision Lists}

In this section, we present $\mathsf{DP\text{-}GreedyCover}$, a computationally and sample efficient PAC learner for decision lists. We describe our algorithm as directly handling $\mathcal{F}$-decision lists for some class of feature functions $\mathcal{F}$. Our algorithm will be differentially private and an accurate PAC learner according to the following definitions.

\begin{definition}[\cite{DworkMNS06, KasiviswanathanLNRS08}]
    A learning algorithm $L: (\mathcal{X} \times \{0, 1\})^n$ is $(\varepsilon, \delta)$-differentially private if, for all neighboring datasets $S, S' \in (\mathcal{X} \times \{0, 1\})^n$ (i.e., differing in a single example) and all sets of hypotheses $R$,
    \[\Pr[L(S) \in R] \le e^{\varepsilon} \Pr[L(S')\in R] + \delta.\]
\end{definition}

\begin{definition}
    Let $\mathcal{D}$ be a distribution over a domain $\mathcal{X}$, and let $\mathcal{C}$ be a concept class of Boolean functions over $\mathcal{X}$. Given $c, h : \mathcal{X} \to \{0, 1\}$, define
    \[err_{\mathcal{D}}(c, h) = \Pr_{x \sim \mathcal{D}}[c(x) \ne h(x)].\]
    An algorithm $L$ is an $(\alpha, \beta)$-PAC learner with sample complexity $n$ if, for every distribution $\mathcal{D}$ and every target concept $c \in \mathcal{C}$,
    \[\Pr_{h \gets L(S)}[err_{\mathcal{D}}(c, h) \le \alpha]\ge 1 - \beta,\]
    where the probability is taken over $S = \{(x_i, c(x_i)\}_{i=1}^n$ where the $x_i$-s are drawn i.i.d. from $\mathcal{D}$, as well as any internal randomness of $L$.
\end{definition}

\subsection{Algorithm Overview}

$\mathsf{DP\text{-}GreedyCover}$ operates on a sample $S$ consisting of samples of the form ($x_i, \sigma_i)$, where each $\sigma_i = c^*(x_i)$ for some unknown decision list $c^*$. The algorithm iteratively maintains a set $S_j$ of unclassified samples and applies the Exponential Mechanism to select a function-label pair $(f_j,b_j)$ with high score with respect to a quality function $q(S_j, (f_j, b_j))$ that guarantees that $(f_j, b_j)$ errs on few examples in $S_j$. Then, the algorithm continues to the next iteration with $S_{j+1}$, the subset of the sample $S_j$ that contains examples on which $f_j$ evaluates to 0, and $\mathcal{F}_{j+1}$, the subset of $\mathcal{F}_{j}$ without $f_j$.

\begin{algorithm2e}[H]
\caption{$\mathsf{DP}\text{-}\mathsf{GreedyCover}$}
\label{alg: DP-GreedyCover}
\SetAlgoLined
\LinesNumbered
\DontPrintSemicolon
\KwIn{ Labeled sample $S = \{ (x_i, \sigma_i)\}_{i=1}^n \in (\{0,1\}^d \times  \{ 0, 1\} )^n$, set of feature functions $\mathcal{F}$ }
\KwOut{Decision list $h_{fin} = [(f_1, b_1), (f_2, b_2), ..., (f_M, b_M) ]$}
Initialize: Let $\mathcal{F}_1 = \mathcal{F}$ together with the constant true function $T$, and let $S_1 = S$.\;
\For{$j = 1, 2, \dots, M$}{
        Let $S_{j}^1$ and $S_{j}^0$ denote the  positive and negative examples in $S_{j}$, respectively.\;
        \For{$f$ $\in$ $\mathcal{F}_j$}{
        Let $\underset{f \rightarrow 1}{\#}(S_{j}^1)$ and $\underset{f \rightarrow 1}{\#}(S_{j}^0)$ denote the number of positive and
        negative examples in $S_j$, respectively, that function $f$ labels as 1. That is,\;
        {\hspace{10mm} $\underset{f \rightarrow 1}{\#}(S_{j}^1) = |\{ x \in S_{j}^1 : f(x) = 1\}|$}\;
        {\hspace{10mm} $\underset{f \rightarrow 1}{\#}(S_{j}^0) = |\{ x \in S_{j}^0 : f(x) = 1\}|$      \;
        }
        } 
        \For{$f$ $\in$ $\mathcal{F}_j$, $b \in \{0,1\}$}{
        Define the quality function $q(S_j, (f , b)) =  -\#_{f \rightarrow 1}(S_{j}^{1-b})$\;
        } 
        Let $(f_j,b_j) \leftarrow \mathcal{M}_E(S_j, q, \varepsilon)$\;
        Let $h_j \leftarrow (f_{j}, b_{j})$, $S_{j+1} = \{x \in S_{j} | f_{j}(x) = 0\}$, and $\mathcal{F}_{j+1} = \mathcal{F}_j - \{f_j\}$\;
} 
Return the hypothesis $h_{fin} = [h_1, h_2, ..., h_{M}]$\;
\end{algorithm2e}

The success of the iterative selection procedure depends on the answer to the following question: Is there always a candidate function within $ \mathcal{F}_{j} $ capable of accurately learning the remaining samples, especially as the subsets $S_j$ become progressively smaller?

We show that this is the case in Proposition~\ref{proposition:consistent k-decision list}, which confirms that decision lists can be learned ``consistently'' by a covering algorithm -- any remaining subset of the sample $S$ labeled by a known decision list $c^*$ can indeed be perfectly learned by a decision list comprised of the candidate functions. At the end of iteration $M$, the algorithm will have exhausted all available functions and all examples will have been classified.

\begin{proposition} \label{proposition:consistent k-decision list}
Let $S = \{(x_i, \sigma_i)\}_{i = 1}^{n} \in (\{0,1\}^d \times \{0, 1\})^n$ be a sample that is labeled by a decision list $c^*$ in the general form of $[(f_1, b_1), (f_2, b_2), ..., (f_m, b_m), (T, b_{m+1})] $ where $m \leq M$. For every iteration of running the $\mathsf{DP\text{-}GreedyCover}$ over $S$, there exists a decision list $c_j$ over the remaining set of candidate functions $\mathcal{F}_{j}$ that is consistent with the remaining sample $S_{j}$. 
\end{proposition}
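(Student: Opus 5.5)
The plan is to argue by induction on the iteration index $j$, maintaining an explicit consistent decision list for $S_j$ over $\mathcal{F}_j$. The key observation is that $c^*$ remains consistent with every subset of $S$. The list $c_j$ is obtained from $c^*$ by deleting those rules whose conditioning functions have already been selected. Concretely, let $c_j$ be $c^*$ with every pair $(f_i,b_i)$ such that $f_i\notin\mathcal{F}_j$ removed, keeping the relative order of the survivors. If the final rule $(T,b_{m+1})$ has been removed, we replace it with any constant default; we will see that this case is harmless.

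For the base case $j=1$, we have $S_1=S$. Moreover $\mathcal{F}_1=\mathcal{F}\cup\{T\}$ contains every function appearing in $c^*$, so $c_1=c^*$, which is consistent with $S$ by assumption.

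For the inductive step, we show that $c_j$ is consistent with $S_j$ directly rather than through $c_{j-1}$. Every $x\in S_j$ satisfies $f_{j'}(x)=0$ for all $j'<j$, since $S_j$ consists exactly of the examples not covered by the previously selected $f_1,\dots,f_{j-1}$. Fix $x\in S_j$ and let $i$ be the first index at which $c^*$ fires on $x$, so that $c^*(x)=b_i=\sigma$.
\begin{itemize}
\item Every function $f_{j'}$ that has been removed from $\mathcal{F}_j$ evaluates to $0$ on $x$. Hence $f_i$ was not removed, and the rule $(f_i,b_i)$ survives in $c_j$.
\item Every surviving rule that precedes $(f_i,b_i)$ in $c_j$ also preceded it in $c^*$, so it evaluates to $0$ on $x$ by the minimality of $i$.
\end{itemize}
Therefore $c_j$ fires first at $(f_i,b_i)$ and outputs $\sigma$. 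If instead $T$ itself was selected at some earlier iteration $j'<j$, then $S_j=\emptyset$ because $T(x)=1$ for every $x$, and any list, including $c_j$, is vacuously consistent.

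The only delicate points are these bookkeeping issues: the removal of $T$, and the fact that $c^*$ may use a function several times, or a function that the algorithm selected with a different label $b_j$. Both are handled by the argument above, because only the covering property $f_{j'}(x)=0$ on $S_j$ is used, never the labels chosen by the Exponential Mechanism. Consequently the statement holds regardless of the mechanism's randomness.
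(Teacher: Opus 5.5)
Your proof is correct and is essentially the paper's argument. The paper inductively deletes the rule of the newly selected function from the current consistent list, using the fact that every example left in $S_{j+1}$ evaluates that function to $0$; unrolling this induction gives exactly your $c_j$ ($c^*$ with the rules of all previously selected functions removed), and your direct first-firing-rule check, including the case where $T$ is selected so that $S_j=\emptyset$, is a slightly cleaner write-up of the same idea.
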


\begin{proof} 
It suffices to show that by induction over iteration $j$ of $\mathsf{DP\text{-}GreedyCover}$'s execution over $S$, there exists a $c_j$ in $\mathcal{F}_{j}$ which is consistent with the sample $S_j$.

For $j = 1$, we set $c_j = c^*$. For $j \ge 2$, suppose that there exists a $c_j$ over $\mathcal{F}_{j}$ that is consistent with the sample $S_j$ and fix an arbitrary $f_z \in \mathcal{F}_{j}$. 
Consider the following case analysis: 

\begin{enumerate}
    \item  Assume $f_z$ does not appear in $c_j$. Then we may take $c_{j+1} = c_j$. To see this, since $S_{j}' \subseteq S_{j}$ and $c_j$ is a decision list over $\mathcal{F}_{j}\textbackslash \{f_z\}$ that is consistent with $S_{j}$, $c_{j}$ is also consistent with $S_{j}'$.
    \item Assume $f_z$ appears in $c_j = [(f_1, b_1), (f_2, b_2), ..., (f_z, b_z), ..., (f_m, b_m), (T, b_{m+1})]$. Then we may take $c_{j+1} = [(f_1, b_1), ...(f_{z-1}, b_{z-1}),(f_{z+1}, b_{z+1}), ..., (T, b_{m+1})]$, which is $c_j$ without the term $(f_z, b_z)$. 
\end{enumerate}

Observe that the subset $S_j$ that has been classified up to rule $f_{z-1}$ is the same as the subset $S_{j}'$ that has been classified up to function $f_{z-1}$. Since $S_{j} = S_{j}'$, for the terms $(f_{z+1}, b_{z+1})$ and after, the residual subset $S_{j}$ is classified in the same pattern as $S_{j}'$.
\end{proof}

\subsection{Algorithm Analysis}

Theorem~\ref{theorem: zCDP DP-GreedyCover} summarizes the privacy and accuracy guarantees of $\mathsf{DP\text{-}GreedyCover}$. The complete proof is provided in Appendix~\ref{appendix:greedy-cover}.

\begin{theorem}\label{theorem: zCDP DP-GreedyCover}
Let $M = |\mathcal{F}|$. There exists a $(\varepsilon, \delta)$-DP $(\alpha,\beta)$-PAC learner for the concept class $\mathcal{C}$ of $\mathcal{F}$-decision lists with sample complexity 
\[ 
n \geq \max\left(\frac{64}{\alpha}(VC(\mathcal{C})\log(\frac{64}{\alpha}) + \log(\frac{16}{\beta})), \frac{8M\log(\frac{2M}{\sqrt{\beta}})(2\log(\frac{1}{\delta})+\frac{3}{2})}{\alpha\varepsilon}\right)
\]
and runtime $\operatorname{poly}(M)$.
\end{theorem}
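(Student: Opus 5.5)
We will analyze $\mathsf{DP\text{-}GreedyCover}$ (Algorithm~\ref{alg: DP-GreedyCover}), run with a per-round privacy parameter $\varepsilon_0$ chosen below. The proof has three parts: privacy, empirical accuracy, and generalization.

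\textbf{Privacy.} Naive composition over $M$ rounds of the Exponential Mechanism would cost a factor $\sqrt{M}$ too much. Instead we will follow the set-cover analyses of \citet{gupta2009differentially, DBLP:journals/corr/abs-1902-05017}. Fix neighboring samples $S, S'$ that differ in one example $x$ versus $x'$, and fix an output sequence $(f_1,b_1),\dots,(f_M,b_M)$.

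The key observation is that the neighboring example influences only the rounds before it is removed from $S_j$. It is removed at the first $j$ with $f_j(x)=1$. In every round it is present, it changes each score $q(S_j,(f,b))$ by at most $1$.

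We will therefore write the likelihood ratio as a product over rounds. In each round the ratio of Exponential Mechanism probabilities is
\[
\frac{\exp(-\varepsilon_0 q(S_j,\cdot)/2)/Z_j}{\exp(-\varepsilon_0 q(S'_j,\cdot)/2)/Z'_j}.
\]
The numerator factors contribute at most $e^{\varepsilon_0}$ in total, since they are charged only once (at the round where $x$ is covered).

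The normalizer ratios $Z'_j/Z_j$ are bounded by $1+O(\varepsilon_0)\cdot p_j$, where $p_j$ is the probability under the mechanism that round $j$ selects a pair whose $f$ covers $x$. So the product of normalizer ratios is at most $\exp(O(\varepsilon_0)\sum_j p_j)$.

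The sum $\sum_j p_j$ is the expected number of rounds until $x$ is covered, up to the covering round. A stopping-time / martingale argument, exactly as in the set-cover proofs, shows this sum exceeds $\log(1/\delta)$ only with probability at most $\delta$. This yields $(\varepsilon_0(2\log(1/\delta)+3/2),\delta)$-DP, which matches the factor $2\log(1/\delta)+\tfrac32$ appearing in the theorem.

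This step is the main obstacle. The delicate points are handling the fact that removing $f_j$ changes $\mathcal{F}_{j+1}$ identically under both datasets, and bounding the tail of $\sum_j p_j$. My first attempt will be to transcribe Gupta et al.'s unweighted set cover proof, treating each pair $(f,b)$ as a set that covers the points with $f(x)=1$.

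\textbf{Empirical accuracy.} By Proposition~\ref{proposition:consistent k-decision list}, at every round some $f\in\mathcal{F}_j$ together with its correct bit has score $0$. This holds because the first term of the consistent list $c_j$ makes no errors on $S_j$. By Proposition~\ref{proposition: Exponential Mechanism Utility} applied with $|H|\le 2(M+1)$ and $\tau=\log(M/\sqrt\beta)$, together with a union bound over the $M$ rounds, each chosen term errs on at most $\frac{2}{\varepsilon_0}\log(2M/\sqrt\beta)$ examples of $S_j$. This holds with probability at least $1-\beta/2$ (up to constants).

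Every example is classified by exactly one term, namely the first term covering it, with the final $T$ term covering everything left. Summing over rounds, the empirical error is at most $\frac{2M}{\varepsilon_0}\log(2M/\sqrt\beta)$. This quantity is at most $\alpha n/4$ once $n\ge 8M\log(2M/\sqrt\beta)/(\alpha\varepsilon_0)$. Substituting $\varepsilon_0=\varepsilon/(2\log(1/\delta)+3/2)$ gives the second term in the max.

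\textbf{Generalization and runtime.} We will invoke the standard uniform-convergence bound for hypothesis classes of finite VC dimension. The output always lies in $\mathcal{C}$, and in the realizable setting any hypothesis with empirical error at most $\alpha/4$ has true error at most $\alpha$ with probability at least $1-\beta/2$. This requires $n\ge\frac{64}{\alpha}(VC(\mathcal{C})\log\frac{64}{\alpha}+\log\frac{16}{\beta})$, the first term in the max. A union bound over the two failure events completes accuracy.

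For runtime: there are $M$ rounds, each computing $O(M)$ scores over $n$ examples and sampling once, which is $\operatorname{poly}(M)$ for $n$ polynomial in $M$.
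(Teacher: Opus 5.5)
Your route is the paper's. Privacy uses the $\mathsf{EMCover}$-style argument of \citet{gupta2009differentially}: charge the numerator once at the covering round, bound each normalizer ratio by $1+(e^{\varepsilon_0/2}-1)p_j$, and control $\sum_{j<t}p_j$ with the martingale tail bound of Lemma~\ref{lemma:random-process}. Accuracy uses consistency, Exponential Mechanism utility and a union bound, followed by the VC bound. Your privacy factor is fine: sampling proportionally to $\exp(\varepsilon_0 q/2)$, the argument actually gives $\varepsilon_0(\ln(1/\delta)+\tfrac32)$, which is below your $\varepsilon_0(2\ln(1/\delta)+\tfrac32)$. The problem is the empirical-accuracy bookkeeping.

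\textbf{The per-round bound is undercounted.} Proposition~\ref{proposition: Exponential Mechanism Utility} with $|H|\le 2(M+1)$ and $\tau=\log(M/\sqrt\beta)$ bounds the errors of the $j$-th chosen term by
\[
\frac{2}{\varepsilon_0}\Bigl(\log(2(M+1))+\log\frac{M}{\sqrt\beta}\Bigr)=\frac{2}{\varepsilon_0}\log\frac{2M(M+1)}{\sqrt\beta},
\]
not $\frac{2}{\varepsilon_0}\log\frac{2M}{\sqrt\beta}$. You have dropped the $\log(M+1)$ part of $\log|H|$, which is a constant fraction of the bound.

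\textbf{The confidence is wrong.} The union bound over $M$ rounds gives failure probability $M\cdot\sqrt\beta/M=\sqrt\beta$. That is not $\beta/2$ ``up to constants''.

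Your final constant $8$ comes out right only because the undercount is offset by your stricter-than-needed empirical threshold $\alpha/4$. With the correct per-round bound and threshold $\alpha/4$ you would get $16$ in place of $8$.

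\textbf{The fix.} Take $\tau=\log(M/\beta)$. Then, using $M+1\le 2M$, each term errs on at most
\[
\frac{2}{\varepsilon_0}\log\frac{2M(M+1)}{\beta}\le\frac{4}{\varepsilon_0}\log\frac{2M}{\sqrt\beta}
\]
examples, with total failure probability $\beta$. Now require the total $\frac{4M}{\varepsilon_0}\log\frac{2M}{\sqrt\beta}$ to be at most $\alpha n/2$. This is exactly the threshold Theorem~\ref{theorem: VC Dimension Generalization Bound} uses, and it yields the stated second term.

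Two smaller points. First, with the stated first term, the VC bound holds with probability $1-\beta$, not $1-\beta/2$. So the overall confidence is $1-2\beta$, as in the paper, unless you adjust constants. Second, your neighbors differ by replacement ($x$ versus $x'$), but the covering argument handles a single added or removed point. Either track both covering rounds, or pass through $S\setminus\{x\}$ and use group privacy; the paper has the same omission.
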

Note that the VC dimension of $\mathcal{C}$ is at most the logarithm of its cardinality, which is also $\tilde{O}(M)$.\\

Our algorithm makes essential use of realizability. Even non-privately, the prospect of extending these techniques to the agnostic setting faces fundamental computational barriers~\cite{FeldmanGRW09}. Specifically, efficient agnostic learning of even disjunctions would imply a breakthrough result for PAC learning poly-size DNF, suggesting very different techniques will be needed.

\section{Differentially Private Winnow for Online Learning Halfspaces}

 Recall that we model online learning as a sequential $T$-round game between a learner and an adversary. In this paper, we focus on \emph{oblivious} adversaries, which must choose a sequence of labeled examples $(x_1, y_1), \dots, (x_T, y_T)$ in advance, but presents them to the learner in an online fashion. In each round $t$ of the game, the adversary reveals unlabeled example $x_t$, the learner responds with a hypothesis $h_t : \mathcal{X} \to \{-1, 1\}$, and then the adversary reveals the label $y_t$.

\begin{definition}[Differentially Private Learning Against an Oblivious Adversary]
Let $S$ denote a sequence of labeled examples of the form $(x_1, y_1), \dots, (x_T, y_T)$. Denote by $M(S)$ the sequence $h_1, \dots, h_T$ of hypotheses produced by an online learner $M$ playing the above game against an adversary choosing the sequence $S$. We say that learner $M$ is $(\varepsilon, \delta)$-differentially private if, for every pair $S, S'$ differing in a single example and every set $R$ of sequences of hypotheses, we have $\Pr[M(S) \in R] \le e^{\varepsilon}\Pr[M(S') \in R] + \delta$.
\end{definition}

\subsection{ConfidentWinnow}
\vspace{-1mm}
In this section, we present $\mathsf{ConfidentWinnow}$, a non-private online learning algorithm for learning large-margin halfspaces. This algorithm serves as the foundational subroutine for its private analog, detailed in Section \ref{subsection: DP-Winnow}. The pseudocode of the algorithm is given in Algorithm~\ref{alg: ConfidentWinnowHalfSpace}. Unlike the traditional Winnow Algorithm, $\mathsf{ConfidentWinnow}$ is guaranteed not to update its weight vector only when it predicts both correctly and confidently.

\begin{algorithm2e}
\label{alg: ConfidentWinnowHalfSpace}
\SetAlgoLined
\LinesNumbered
\DontPrintSemicolon
\KwIn{Time horizon $T$, confidence parameter $c$, margin parameter $\rho$}
\KwOut{Sequence of weight vectors ${w}^{(1)}, \ldots, {w}^{(T)}$ that represent halfspaces}
Initialize: $w^{(0)} = (\frac{1}{d}, \dots, \frac{1}{d})$\;
\For{$t = 1, 2, \dots, T$}{
        $w^{(t)} \gets w^{(t-1)}$\;
        Adversary presents example $x^{(t)}$\;
        Learner outputs $w^{(t)}$\;
        Adversary reveals $y^{(t)}$ and chooses bit $b^{(t)} \in \{0, 1\}$\;
        \If{$\sgn(\langle w^{(t)}, x^{(t)}\rangle) \ne y^{(t)}$ or ($-c\rho < \langle w^{(t)}, x^{(t)}\rangle < c\rho $ and $b^{(t)} = 1$)}{
        {\color{gray} // Perform multiplicative weight update on $w^{(t)}$ using $(x^{(t)}, y^{(t)})$}\;
        \For{$j=1$ to $d$}{
                 $w^{(t)}_{j} \gets \frac{\exp({\eta y^{(t)}x^{(t)}_j})}{\sum_{k=1}^{d}w^{(t-1)}_k \exp({\eta y^{(t)}x^{(t)}_k})}\cdot w^{(t-1)}_{j}$\;
        }   
    }     
}   
\caption{$\mathsf{ConfidentWinnow}$}
\end{algorithm2e}

Lemma~\ref{lemma:ConfidentWinnowUpdatesUpperBound} below summarizes the guarantees of $\mathsf{ConfidentWinnow}$. Our analysis closely follows the standard analysis of Winnow as presented by~\citet{Mohri2018} and appears in Appendix~\ref{appendix:confident-winnow}.
\begin{lemma}\label{lemma:ConfidentWinnowUpdatesUpperBound}
Let $x^{(1)}, \dots, x^{(T)} \in \{-1, 1\}^{d}$ be an arbitrary sequence of $T$ points and $b^{(1)}, \dots, b^{(T)}$ and arbitrary sequence of update instructions. Assume that there exists a weight vector $v \in \mathbb{R}^{d}$ where $v \ge 0$ and $\|v\|_1 = 1$ and a margin $\rho \ge 0$ such that for every $t \in [T]$, $y^{(t)}(v\cdot x^{(t)})\ge \rho$.
Then, when run with learning rate $\eta$ and confidence ratio $c < 1/2$, the number of updates made by $\mathsf{ConfidentWinnow}$ is at most $\frac{\log d}{(1-c)\eta\rho - \eta^2/2}$.
\end{lemma}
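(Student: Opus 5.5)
We follow the standard potential-function analysis of Winnow, using the relative entropy between the target $v$ and the current weights,
\[
\Phi_t = \mathrm{KL}(v \,\|\, w^{(t)}) = \sum_{j=1}^d v_j \log \frac{v_j}{w^{(t)}_j}.
\]
Because $w^{(0)}$ is uniform, $\Phi_0 = \log d - H(v) \le \log d$. Relative entropy is always nonnegative, so $\Phi_t \ge 0$ throughout.

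The potential only moves on update rounds. On every non-update round $w^{(t)} = w^{(t-1)}$, so $\Phi$ is unchanged. It therefore suffices to show that each update decreases $\Phi$ by at least $(1-c)\eta\rho - \eta^2/2$. Telescoping then gives
\[
U \cdot \bigl((1-c)\eta\rho - \eta^2/2\bigr) \le \Phi_0 \le \log d,
\]
where $U$ is the number of updates, which is exactly the claimed bound. (When the denominator is nonpositive the bound is vacuous or meaningless; we implicitly assume $\eta$ is chosen so that it is positive.)

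To bound the change on an update round $t$, write $Z_t = \sum_k w_k \exp(\eta y^{(t)} x^{(t)}_k)$, where $w$ denotes the weight vector before the update. Since $\|v\|_1 = 1$,
\[
\Phi_{\text{old}} - \Phi_{\text{new}} = \sum_j v_j \log \frac{w^{\text{new}}_j}{w_j} = \eta y^{(t)} \langle v, x^{(t)} \rangle - \log Z_t \ge \eta\rho - \log Z_t,
\]
where the inequality uses the margin assumption. To control $\log Z_t$, apply Hoeffding's lemma to the random variable $y^{(t)} x^{(t)}_J \in [-1,1]$ with $J \sim w$. This gives
\[
\log Z_t \le \eta y^{(t)} \langle w, x^{(t)} \rangle + \frac{\eta^2 (2)^2}{8} = \eta y^{(t)} \langle w, x^{(t)} \rangle + \frac{\eta^2}{2}.
\]

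The key step, and the only departure from the usual analysis, is bounding $y^{(t)} \langle w, x^{(t)} \rangle$ on an update round. Standard Winnow updates only on mistakes, where this quantity is $\le 0$. Here an update can also occur on a correct but unconfident prediction, so we split into two cases:
\begin{itemize}
\item \emph{Mistake round.} $\sgn \langle w, x^{(t)} \rangle \ne y^{(t)}$, so $y^{(t)} \langle w, x^{(t)} \rangle \le 0 \le c\rho$. (If the inner product is $0$ the sign convention could go either way, but $0 \le c\rho$ regardless.)
\item \emph{Unconfident round.} The update was triggered by $|\langle w, x^{(t)} \rangle| < c\rho$ with $b^{(t)} = 1$, so $y^{(t)} \langle w, x^{(t)} \rangle < c\rho$.
\end{itemize}
In both cases $y^{(t)} \langle w, x^{(t)} \rangle \le c\rho$. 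Substituting into the two displays above, the decrease on every update round is at least $\eta\rho - \eta c\rho - \eta^2/2 = (1-c)\eta\rho - \eta^2/2$, as required. This holds for any choice of the adversary's update bits $b^{(t)}$.

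The main potential obstacle is bookkeeping rather than mathematics. We must check that the pseudocode's normalization uses the pre-update weights $w^{(t-1)}$, which coincide with $w^{(t)}$ before the update, so the calculation above applies verbatim. The hypothesis $c < 1/2$ is not needed beyond ensuring positivity of the denominator for suitable $\eta$.
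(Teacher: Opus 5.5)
Your proposal is correct and follows essentially the same argument as the paper: the relative-entropy potential $\sum_i v_i \log(v_i/w_i)$, the per-update bound $\eta y\langle v, x\rangle - \log Z_t \ge \eta\rho - \log Z_t$ from the margin, Hoeffding's lemma applied to $y x_J$ with $J \sim w$, the fact that every update round has $y\langle w, x\rangle \le c\rho$, and telescoping with $0 \le \Phi \le \log d$. Your explicit split into mistake rounds and unconfident rounds just spells out a step the paper asserts in one line. You are also right that $c < 1/2$ plays no role beyond the positivity of the denominator, which the proof implicitly assumes.
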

\subsection{DP-Winnow}\label{subsection: DP-Winnow}
In this section, we describe $\mathsf{DP\text{-}Winnow}$ (Algorithm~\ref{alg: DP-WinnowHalfSpace}), our main algorithm for learning large-margin halfspaces in the realizable setting with an oblivious adversary. 

\paragraph{Sparse Vector Technique.} Our differentially private Winnow algorithm relies on the sparse vector technique~\citep{DworkR14}. The technique rests on an interactive algorithm called $\mathsf{AboveThreshold}$ that, given a threshold parameter $L$ and a sequence of sensitivity-1 queries \\$q^{(1)}, q^{(2)}, \dots, q^{(T)}$ presented in a online fashion, approximately determines the first query whose value exceeds $L$. Following~\citet{asi2023nearoptimal}, we describe the interface to $\mathsf{AboveThreshold}$ via the following three functions:

\begin{enumerate}
\item $\mathsf{InitializeAboveThr}(\varepsilon, L, \beta)$: Initialize a new instance of $\mathsf{AboveThreshold}$ with privacy parameter $\varepsilon$, threshold $L$, and failure probability $\beta$. This returns a data structure $Q$ supporting the following two functions.
\item $Q.\mathsf{AddQuery}(q)$: Adds a new sensitivity-1 query $q: \mathcal{Z}^n \to \mathbb{R}$ to $Q$.
\item $Q.\mathsf{TestAboveThr}()$: Test if the most recent query added to $Q$ was (approximately) above the threshold $L$. If that is the case, the data structure ceases to accept further queries.
\end{enumerate}

\begin{lemma}~\citep{DworkR14} \label{lemma: SVT properties}
There is an $\varepsilon$-differentially private interactive algorithm $\mathsf{AboveThreshold}$ with the interface described above. For every threshold parameter $L$ and failure probability $\beta$, and every dataset $D \in \mathcal{Z}^n$, with probability at least $1-\beta$, the following holds. Let $q^{(1)}, \dots, q^{(T)}$ be the sequence of queries added to $Q$, and let $k$ be the index of the first query for which $Q.\mathsf{TestAboveThr()}$ would return $\mathsf{True}$ (or $T+1$ if no such query exists). Then for $\alpha = \frac{8 \log (2T/\beta)}{\varepsilon}$,
\begin{enumerate}
    \item For all $t < k$, we have $q^{(t)}(D) \le L + \alpha$, and
    \item Either $q^{(k)}(D) \ge L -\alpha$ or $k = T+1$.
\end{enumerate}
\end{lemma}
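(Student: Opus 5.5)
The plan is to analyze the standard $\mathsf{AboveThreshold}$ implementation directly. On initialization, draw a noisy threshold $\hat L = L + \nu$ with $\nu \sim \mathrm{Lap}(2/\varepsilon)$. For each added query $q^{(t)}$, draw fresh noise $\mu_t \sim \mathrm{Lap}(4/\varepsilon)$. $\mathsf{TestAboveThr}$ returns $\mathsf{True}$ exactly when $q^{(t)}(D) + \mu_t \ge \hat L$, at which point the structure halts. The output is the transcript $(\bot,\dots,\bot,\top)$ or all $\bot$, so it is determined by the halting index $k \in [T+1]$.

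\textbf{Privacy.} Fix neighbors $D, D'$ and a target index $k$; I would show $\Pr[k \mid D] \le e^{\varepsilon}\Pr[k\mid D']$. Condition on $\mu_1,\dots,\mu_{k-1}$. Let
\[
g(D) = \max_{t<k}\bigl(q^{(t)}(D)+\mu_t\bigr),
\]
so that outcome $k$ occurs iff $\hat L > g(D)$ and $q^{(k)}(D)+\mu_k \ge \hat L$. Because each query has sensitivity $1$, we have $|g(D)-g(D')|\le 1$. Change variables $\hat L' = \hat L + (g(D')-g(D))$ and $\mu_k' = \mu_k + (g(D')-g(D)) + (q^{(k)}(D)-q^{(k)}(D'))$. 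This shift maps the event for $D$ onto the event for $D'$. The density ratio costs at most $e^{\varepsilon/2}$ for the threshold noise, which is shifted by at most $1$ at scale $2/\varepsilon$. It costs at most $e^{\varepsilon/2}$ for $\mu_k$, which is shifted by at most $2$ at scale $4/\varepsilon$. Integrating over the conditioned noise gives pure $\varepsilon$-DP. This coupling argument is the main delicate step: the shift must be chosen so that the ``below'' conditions for all $t<k$ are preserved simultaneously, and it is the single shared threshold noise that makes this possible.

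\textbf{Accuracy.} Use the Laplace tail bound $\Pr[|\mathrm{Lap}(b)|>b\log(1/\gamma)]\le\gamma$. Apply it with $\gamma=\beta/2$ to $\nu$, and with $\gamma=\beta/(2T)$ to each $\mu_t$. A union bound then shows that with probability at least $1-\beta$:
\begin{itemize}
\item $|\nu| \le \frac{2}{\varepsilon}\log(2/\beta)$;
\item $|\mu_t|\le \frac{4}{\varepsilon}\log(2T/\beta)$ for all $t$.
\end{itemize}
Their sum is at most $\frac{6\log(2T/\beta)}{\varepsilon}\le\alpha$. On this event, consider any $t<k$. The test returned false, so $q^{(t)}(D) < \hat L - \mu_t \le L+\alpha$. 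If $k\le T$, the test returned true at $k$, so $q^{(k)}(D)\ge \hat L-\mu_k\ge L-\alpha$. These are exactly the two conclusions, and the constant $8$ in $\alpha$ leaves slack.
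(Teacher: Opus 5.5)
Your proposal is correct and is essentially the standard Dwork--Roth analysis of $\mathsf{AboveThreshold}$: a change of variables on the shared noisy threshold for privacy, and Laplace tail bounds with a union bound for accuracy. The paper invokes this result by citation rather than reproving it. Your deviations are cosmetic: the uneven split of the failure budget gives the constant $6$ in place of $8$, and the all-$\bot$ outcome $k=T+1$ needs only the threshold shift (cost $e^{\varepsilon/2}$), which your argument covers implicitly.
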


\paragraph{}{We leverage the sparse vector technique to efficiently and privately decide update rounds. In classical Winnow and $\mathsf{ConfidentWinnow}$, updates only occur on rounds in which a prediction error occurred or if the prediction is ``unconfident.'' By using $\mathsf{AboveThreshold}$ to conditionally trigger updates when errors exceed a noisy threshold}, we ensure the privacy cost scales with the number of updates ($\operatorname{polylog}(T)$) rather than the total number of rounds $T$.

\begin{algorithm2e}[H]
\caption{$\mathsf{DP}\text{-}\mathsf{Winnow}$}
\label{alg: DP-WinnowHalfSpace}
\SetAlgoLined
\LinesNumbered
\DontPrintSemicolon
\KwIn{Time horizon $T$, margin parameter $\rho$, privacy 
 parameter $\hat{\varepsilon}$, learning rate $\eta$, threshold parameter $L$, normalization parameter $m$, failure probability $\beta$, switching bound $K$,  Examples $(x^{(1)}, y^{(1)}), \dots, (x^{(T)}, y^{(T)})$ }
\KwOut{Sequence of weight vectors $\tilde{w}^{(1)}, \ldots, \tilde{w}^{(T)}$ representing halfspaces}
Initialize: $k \gets 0$, $t_p \gets 0$, $C \gets \emptyset$, 
$w^{(1)} \gets  \left(\frac{1}{d}, \frac{1}{d}, \ldots, \frac{1}{d}\right)$,
$\tilde{w}^{(1)} \gets  \left(\frac{1}{d}, \frac{1}{d}, \ldots, \frac{1}{d}\right)$\;
$Q \gets \mathsf{InitializeAboveThr}(\hat{\varepsilon}, L, \beta/T)$\;
\For{$t = 1, 2, \dots, T$}{
    Receive unlabeled example $x^{(t)}$\;
    Output hypothesis $\tilde{w}^{(t)}$\;
    Receive label $y^{(t)}$\;
    \If{$\sgn(\langle\tilde{w}^{(t)}, x^{(t)}\rangle) \ne y^{(t)}$} {
    $C \gets C\cup \{(x^{(t)}, y^{(t)})\}$\;
    }
    Define query $q^{(t)} \gets \sum_{z = t_p}^{t} 1[\sgn(\langle\tilde{w}^{(z)}, x^{(z)}\rangle) \ne y^{(z)}]$\;
    $Q.\mathsf{AddQuery}(q^{(t)})$\;
    \If{$Q.\mathsf{TestAboveThr()}$ and $k < K$} {\label{alg:DP-Winnowtest}
         Let $(\hat{x},\hat{y})$ be the first example cached in $C$ \label{alg:DP-Winnow-weights-update-start}\;
        {\color{gray} // Perform multiplicative weight update on $w^{(t)}$ using $(\hat{x}, \hat{y})$}\;
        \For{$j = 1, 2, \dots, d$}{
                  $w^{(t)}_{j} \gets \frac{\exp({\eta \hat{y}\hat{x}_j})}{\sum_{k =1}^{d}w^{(t-1)}_k \exp({\eta \hat{y}\hat{x}_k})}\cdot w^{(t-1)}_{j}$\;
        } 
        {\color{gray} // Update approximate weight vector $\tilde{w}^{(t)}$ by sampling from $w^{(t)}$}\;
         $j_1, \ldots, j_m \gets [d]$ i.i.d. where for each $i$, $\Pr[j_i = j] = w^{(t)}_j$\; \label{alg:DP-Winnow-weights-sample}
        \For{$j = 1, 2, \dots, d$}{
            $\tilde{w}^{(t)}_j \gets \frac{1}{m} \cdot \#\{i : j_i = j\}$\;
        } \label{alg:DP-Winnow-weights-update-end} 
         Update $k \gets k+1$, $C \gets \emptyset$, $t_p \gets t+1$\;
        $Q \gets \mathsf{InitializeAboveThr}(\hat{\varepsilon}, L, \beta/T)$\;
    } 
} 
\end{algorithm2e}

\subsubsection{Privacy Analysis} \label{sec:winnow-privacy}
\vspace{-1mm}

\begin{theorem} \label{thm:winnow-privacy}
    Let $\varepsilon, \delta \in (0, 1)$. If $\hat{\varepsilon} = \varepsilon/(4\sqrt{2K \log (2/\delta)})$ and $\eta = \varepsilon/(8\sqrt{2mK\log(2/\delta)})$, then algorithm $\mathsf{DP\text{-}Winnow}$ is $(\varepsilon, \delta)$-differentially private.
\end{theorem}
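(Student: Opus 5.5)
We treat $\mathsf{DP\text{-}Winnow}$ as an adaptive composition of at most $2K+1$ simpler private mechanisms and then apply advanced composition. The key observation, already noted in the introduction, is that the internal state is a function of public information. Given the example sequence and the released hypotheses $\tilde{w}^{(1)},\dots,\tilde{w}^{(T)}$, one can recompute the mistake indicators $1[\sgn\langle\tilde w^{(z)},x^{(z)}\rangle\neq y^{(z)}]$, the cache $C$, the pointer $t_p$, and the times at which $\tilde w$ was resampled. From these one can also recompute the exact weight vector $w^{(t)}$.

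So we rewrite the algorithm as an interactive sequence of mechanisms run on the dataset $S$. Each mechanism is chosen adaptively from the outputs of the previous ones. There are two kinds.

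\textbf{(a) Sparse vector instances.} An instance of $\mathsf{AboveThreshold}$ is run on the queries $q^{(t)}$ until it fires. Each $q^{(t)}$ counts mistakes of the already-released hypotheses over a window, so changing one example $(x^{(t)},y^{(t)})$ changes each query by at most $1$. By Lemma~\ref{lemma: SVT properties}, each instance is $\hat\varepsilon$-DP.

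\textbf{(b) Sampling steps.} At a firing round we draw $j_1,\dots,j_m$ from $w^{(t)}$. There are at most $K$ such steps, and after $K$ firings no further data-dependent release happens. The last sparse vector instance may run without firing, so there are at most $K+1$ instances of type (a).

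For (b), write $w^{(t)}_j\propto \frac1d\exp\bigl(\eta\sum_{s}\hat y_s\hat x_{s,j}\bigr)$, where the sum runs over the cached examples used so far. This is an exponential mechanism with score $\sum_s \hat y_s \hat x_{s,j}$. The which examples are used are determined by the past public outputs, but the values $\hat x_s,\hat y_s$ come from the private data.

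Changing one example of $S$ can change at most one term of this sum. That term has magnitude $1$, so the sensitivity is at most $2$. In the adaptive-composition view, the selected example is the first cached mistake in a window fixed by earlier outputs, so it is at most one specific term, possibly different between $S$ and $S'$. Hence one draw is $4\eta$-DP, counting a factor $2$ for normalization; the constant will be checked against the sensitivity-1 convention of Proposition~\ref{proposition: Exponential Mechanism Utility}. The $m$ i.i.d.\ draws together are $4\eta m$-DP.

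Subtlety: the same changed example may affect all later weight vectors, not just one sampling step. I would handle this by conditioning. Given the earlier outputs, the $k$-th sampling step is a fresh exponential mechanism whose score differs between $S,S'$ in at most one summand. I would then bound the privacy loss of the $k$-th step alone. The cumulative effect is captured by composition, since each step's loss is bounded conditional on the past.

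Finally, apply advanced composition with failure parameter $\delta/2$, splitting $\delta$ across the two families.

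For the $K+1$ sparse vector mechanisms at $\hat\varepsilon=\varepsilon/(4\sqrt{2K\log(2/\delta)})$, the total is at most about $\sqrt{2(K+1)\log(2/\delta)}\,\hat\varepsilon+(K+1)\hat\varepsilon^2\le \varepsilon/2$.

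For the $Km$ individual draws at $4\eta=\varepsilon/(2\sqrt{2mK\log(2/\delta)})$, we get $\sqrt{2Km\log(2/\delta)}\cdot 4\eta+Km(4\eta)^2\le\varepsilon/2$. Here we compose per draw rather than per batch, which is why $\sqrt m$ rather than $m$ appears in $\eta$.

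Summing gives $(\varepsilon,\delta)$-DP. The releases $\tilde w^{(t)}$ are post-processing of the drawn indices and of the firing times.

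The main obstacle is making the adaptive-composition reduction rigorous. The windows, the cache, and the choice of which example triggers an update all depend on earlier noisy outputs. Moreover, a single changed example can be the first cached example in one execution but not in the other. I would handle this by defining each step's mechanism as a function of (public transcript so far, dataset), and checking neighbor sensitivity for every fixed transcript. The delicate case is the cached example at position $t^\star$. Fixing the transcript fixes whether round $t^\star$ is a mistake under $S$; under $S'$ it may change. That alters which example is first in $C$, but the resulting score still differs in only one summand, whose magnitude is bounded by $1$.
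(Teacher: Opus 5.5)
Your route is the paper's own. Its ``View 2'' also recomputes the shadow weights from the examples and the released $\tilde w$'s. It then treats $\mathsf{DP\text{-}Winnow}$ as an adaptive composition of at most $K$ $\mathsf{AboveThreshold}$ instances and $mK$ single exponential-mechanism draws, and applies advanced composition with $\delta/2$ per family. It handles the ``which cached example comes first'' issue as you do: fix the transcript and observe that only one summand of the exponent can differ. Your per-draw constant is the more careful one. Substituting one example can flip a $\pm1$ summand, so the score has sensitivity $2$ and a draw is $4\eta$-DP. The paper's proof calls the score sensitivity-$1$ and charges only $2\eta$.

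The step that fails as written is your budget split. With $4\eta=\varepsilon/(2\sqrt{2mK\log(2/\delta)})$, the linear term of advanced composition already equals $\varepsilon/2$. So the draws cost $\varepsilon/2+\varepsilon^2/(8\log(2/\delta))$, strictly more than $\varepsilon/2$.

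The theorem still follows, but you must add the two families and use the slack on the sparse-vector side. For that, compose only $K$ $\mathsf{AboveThreshold}$ instances rather than $K+1$. Once the $K$-th firing has occurred, the guard $k<K$ blocks every later update and the released $\tilde w$ is frozen, so the last instance's behaviour is never part of the output. If fewer than $K$ firings occur, at most $K$ instances were observable anyway. With that count,
\[
\frac{\varepsilon}{4}+\frac{\varepsilon^{2}}{32\log(2/\delta)}+\frac{\varepsilon}{2}+\frac{\varepsilon^{2}}{8\log(2/\delta)}=\frac{3\varepsilon}{4}+\frac{5\varepsilon^{2}}{32\log(2/\delta)}<\varepsilon,
\]
because $\varepsilon<1$ and $\log(2/\delta)>\log 2$ give $5\varepsilon/(32\log(2/\delta))<1/4$.

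With your count of $K+1$ instances and $K=1$, the corresponding sum is $(\sqrt2/4+1/2)\varepsilon+3\varepsilon^2/(16\log(2/\delta))$. This exceeds $\varepsilon$ when $\varepsilon$ and $\delta$ are both close to $1$, so the count matters for the stated constants.
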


Our privacy proof appears in Appendix~\ref{appendix:online-privacy}.

\subsubsection{Regret Analysis} \label{sec:winnow-regret}

In this section, we summarize the performance of $\mathsf{DP\text{-}Winnow}$ (Algorithm~\ref{alg: DP-WinnowHalfSpace}) via Theorem~\ref{theorem: DP-Winnow privacy and regret}.

\begin{theorem} \label{theorem: DP-Winnow privacy and regret}
When invoked using parameters $K = \frac{2\log d}{\eta\rho-\eta^2}$, $\hat{\varepsilon} = \varepsilon/(4\sqrt{2K \log (2/\delta)})$,\\ 
$\eta = \varepsilon/(8\sqrt{2mK\log(4K/\delta)})$, $L = \frac{8 \log (2T/\beta)}{\hat{\varepsilon}}$ and $m = 2\log(2T/\beta)/\rho^2$, Algorithm $\mathsf{DP\text{-}Winnow}$ is $(\varepsilon, \delta)$-differentially private and makes at most
\[\tilde{O}\left(\frac{\log^3 d \cdot \log^{5/2}(T/\beta) \cdot \log^2(1/\delta)}{\varepsilon^4 \rho^6}\right)\]
prediction errors with probability at least $1-2\beta$.
\end{theorem}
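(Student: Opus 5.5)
Privacy follows directly from Theorem~\ref{thm:winnow-privacy}, since the parameters here agree with it up to the harmless change of $\log(2/\delta)$ to $\log(4K/\delta)$ in $\eta$, which only makes $\eta$ smaller. The work is in the mistake bound. The plan is to reduce $\mathsf{DP\text{-}Winnow}$ to a run of $\mathsf{ConfidentWinnow}$ against a suitable adversary. We then bound separately the number of updates (``epochs'') and the number of errors within each epoch. We condition on two good events, each of probability at least $1-\beta$:
\begin{itemize}
\item[(i)] every instance of $\mathsf{AboveThreshold}$ satisfies the guarantees of Lemma~\ref{lemma: SVT properties}, with failure probability $\beta/T$ per instance and at most $T$ instances;
\item[(ii)] every sampled vector $\tilde w$ approximates its source $w$ on every example it is used on.
\end{itemize}

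For (ii): $\tilde w$ is the empirical distribution of $m$ i.i.d.\ draws from $w$. For a fixed $x\in\{-1,1\}^d$, $\langle \tilde w, x\rangle$ is an average of $m$ i.i.d.\ $\pm1$ variables with mean $\langle w,x\rangle$. Hoeffding's inequality gives $|\langle \tilde w,x\rangle-\langle w,x\rangle|\le \rho/2$ except with probability $2e^{-m\rho^2/8}$. Because the adversary is oblivious, each $x^{(t)}$ is fixed independently of the sampling randomness. So we can apply the bound at each round $t$ to the currently held $\tilde w$ and take a union bound over the $T$ rounds. With $m$ of order $\log(T/\beta)/\rho^2$ (adjusting constants) this fails with probability at most $\beta$.

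On (ii), every round where $\tilde w^{(t)}$ errs has $y^{(t)}\langle w,x^{(t)}\rangle < \rho/2$: either $w$ itself errs, or it is unconfident with $c=1/2$, or we tighten to $c<1/2$ by using $m$ with a larger constant. Now view the sequence of cached examples $(\hat x,\hat y)$ used for updates, with $b=1$, as an input sequence to $\mathsf{ConfidentWinnow}$. Each such example is an error or an unconfident prediction for the current $w$, so Lemma~\ref{lemma:ConfidentWinnowUpdatesUpperBound} bounds the number of updates that actually fire by $\log d/((1-c)\eta\rho-\eta^2/2)\le K$. Hence the cap $k<K$ never binds.

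On (i), within an epoch the count $q^{(t)}$ of errors since $t_p$ stays at most $L+\alpha = 2L$ until the test fires. After the final epoch the test never fires, so that epoch also has at most $2L+1$ errors. The total number of errors is therefore at most $(K+1)(2L+1)$.

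Finally, substitute the parameters:
\begin{itemize}
\item $L=O(\log(T/\beta)/\hat\varepsilon)=O(\sqrt{K\log(1/\delta)}\log(T/\beta)/\varepsilon)$;
\item $K=O(\log d/(\eta\rho))$, using that $\eta\le\rho$ holds (or the bound is trivial);
\item $\eta=\Theta(\varepsilon/\sqrt{mK\log(K/\delta)})$.
\end{itemize}
Solving the self-referential relation $K\approx \log d\sqrt{mK\log(K/\delta)}/(\varepsilon\rho)$ gives $K=\tilde O(\log^2 d\cdot m\log(1/\delta)/(\varepsilon^2\rho^2))=\tilde O(\log^2 d\log(T/\beta)\log(1/\delta)/(\varepsilon^2\rho^4))$. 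Then $KL=\tilde O(K^{3/2}\sqrt{\log(1/\delta)}\log(T/\beta)/\varepsilon)$. This yields the powers $\log^3 d$, $\log^{5/2}(T/\beta)$, $\log^2(1/\delta)$, $\varepsilon^{-4}$ and $\rho^{-6}$.

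The main obstacle is making the reduction to $\mathsf{ConfidentWinnow}$ rigorous. The update in $\mathsf{DP\text{-}Winnow}$ uses the first cached example, which was an error of $\tilde w$ under the current $w$ because $w$ is unchanged within an epoch. We must check that this is exactly a legal move of the $\mathsf{ConfidentWinnow}$ adversary, with the intervening non-update rounds simply dropped. We also need the concentration event (ii) to hold for each $\tilde w$ against examples chosen before the sampling. Obliviousness supplies this, but it needs careful conditioning.
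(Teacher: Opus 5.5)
Your proposal is correct and follows essentially the same approach as the paper: privacy is inherited from Theorem~\ref{thm:winnow-privacy}. The update count comes from feeding the cached update examples to $\mathsf{ConfidentWinnow}$ with $c=1/2$, which the paper packages as Lemma~\ref{lemma: DP-Winnow wSwitchingBudget and ConfidentWinnow Updates Equivalence} via Lemma~\ref{lemma: high probability argument matching shadow and approximate weight vectors}. Per-epoch errors are controlled by $\mathsf{AboveThreshold}$, and the same substitution yields $K=\tilde O(\log^2 d\cdot m\log(1/\delta)/(\varepsilon^2\rho^2))$. Your deviations only affect constants, and the paper makes the same adjustments: $m$ must be a constant factor larger than the stated $2\log(2T/\beta)/\rho^2$ (the paper's proof silently uses $m=2\log(2T/\beta)/(c^2\rho^2)$), and $L+\alpha$ is at most $3L$ rather than $2L$ since each instance has failure probability $\beta/T$. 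Your treatment of the final epoch and of the conditioning behind the union bound is, if anything, slightly more careful than the paper's.
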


We analyze the regret bound stated in Theorem~\ref{theorem: DP-Winnow privacy and regret} as follows.

\begin{enumerate}
    \item \textbf{Regret Bound of $\mathsf{ConfidentWinnow}$}: 
    We previously showed that $\mathsf{ConfidentWinnow}$ learns halfspaces with finite mistake bound as detailed in Lemma~\ref{lemma:ConfidentWinnowUpdatesUpperBound}.     
    \item \textbf{Consistency of Weight Vectors}: 
    We demonstrate in Lemma~\ref{lemma: high probability argument matching shadow and approximate weight vectors} that, with high probability, the shadow weight vectors and approximate weight vectors used by $\mathsf{ConfidentWinnow}$ make equivalent predictions when the predictions are confident. 
    \item \textbf{Equivalence of Regret and Update Bounds}: We show that the number of mistakes made by $\mathsf{ConfidentWinnow}$ is asymptotically equivalent to $\mathsf{DP\text{-}Winnow}$'s update bound in Lemma~\ref{lemma: DP-Winnow wSwitchingBudget and ConfidentWinnow Updates Equivalence}, bridging the gap between the non-private and private learning algorithms. 
    \item \textbf{Cost Efficiency of the Sparse Vector Technique}: By applying the properties of the Sparse Vector Technique described in Lemma~\ref{lemma: SVT properties}, we establish that $\mathsf{DP\text{-}Winnow}$, with high probability, incurs a polylogarithmic number of mistakes per update.
\end{enumerate}

 Lemma~\ref{lemma: high probability argument matching shadow and approximate weight vectors} demonstrates that with high probability, the predictions made by the shadow weight vectors $w^{(t)}$ and the approximate weight vectors $\tilde{w}^{(t)}$ are consistent across all rounds $t \in [T]$.
 
\begin{lemma}\label{lemma: high probability argument matching shadow and approximate weight vectors}
Let $x \in \{-1, 1\}^d$ be an unlabeled example and let $w$ be a weight vector with $\|w\|_1 = 1$ such that $|\langle w, x\rangle| \ge c\rho$. Construct $\tilde{w}$ according to the following process:
\begin{itemize}
    \item Define a probability distribution over the index set $[d]$ where each coordinate $j$ is sampled with probability $w_{j}$.
    \item For each $i \in \{1, \dots, m\}$, independently sample indices $j_i$ from this distribution.
    \item Set $\tilde{w}_{j}$ for each $j \in [d]$ to $ \frac{1}{m}$ multiplied by the count of $j_i$'s equal to $j$.
\end{itemize}
Let $\beta > 0$ and $m\ge \frac{2}{c^2\rho^2}\log(\frac{2T}{\beta})$. Then, with probability at least $1-\beta/T$,
\[
\operatorname{sgn}(\langle \tilde{w}, x \rangle) = \operatorname{sgn}(\langle w, x \rangle).
\]
\end{lemma}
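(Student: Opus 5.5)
The natural approach is Hoeffding's inequality applied to the empirical average defining $\langle \tilde w, x\rangle$. We write the inner product as an average of i.i.d.\ bounded random variables. For each $i \in \{1,\dots,m\}$ let $Z_i = x_{j_i}$, where $j_i$ is the $i$-th sampled index. Since $x \in \{-1,1\}^d$, each $Z_i \in [-1,1]$. By construction
\[
\langle \tilde w, x\rangle = \sum_{j=1}^d \frac{\#\{i : j_i = j\}}{m}\, x_j = \frac{1}{m}\sum_{i=1}^m Z_i .
\]
We also need the mean of each $Z_i$. The statement only requires $\|w\|_1 = 1$, but the sampling step treats $w$ as a probability distribution, so implicitly $w \ge 0$; this holds for the Winnow iterates, which are nonnegative and normalized. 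With that reading,
\[
\mathbb{E}[Z_i] = \sum_j w_j x_j = \langle w, x\rangle .
\]
So $\langle \tilde w, x\rangle$ is an unbiased empirical mean of $m$ i.i.d.\ variables in $[-1,1]$.

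The key step is the deviation bound. Assume without loss of generality that $\langle w, x\rangle \ge c\rho$; the negative case is symmetric, and we could even flip $x$ to reduce to it. For the sign to flip, $\langle \tilde w, x\rangle$ must be at most $0$. That requires a downward deviation of at least $c\rho$ from the mean. One-sided Hoeffding for variables with range length $2$ gives
\[
\Pr\Bigl[\tfrac1m \textstyle\sum_i Z_i - \langle w,x\rangle \le -c\rho\Bigr] \le \exp\!\Bigl(-\frac{2m^2c^2\rho^2}{m\cdot 4}\Bigr) = \exp\!\Bigl(-\frac{m c^2\rho^2}{2}\Bigr).
\]
Plugging in $m \ge \frac{2}{c^2\rho^2}\log(2T/\beta)$ bounds this by $\beta/(2T) \le \beta/T$. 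Using the two-sided bound instead costs a factor of $2$ and gives exactly $\beta/T$, which matches the stated constants.

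There is one subtlety at zero, which is the only real obstacle. If $\langle \tilde w, x\rangle = 0$ exactly, then $\sgn$ is ambiguous, and the event we need to rule out is $\le 0$, not $<0$. Our Hoeffding event already includes equality with zero, since a deviation of exactly $c\rho$ puts the mean at $0$. So the bound covers ties regardless of the $\sgn$ convention. Hence, with probability at least $1-\beta/T$, $\langle \tilde w, x\rangle > 0$, and therefore $\sgn\langle \tilde w, x\rangle = \sgn\langle w, x\rangle$.
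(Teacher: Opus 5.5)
Your proof is correct and takes essentially the same route as the paper: write $\langle \tilde w, x\rangle$ as the empirical mean of the $\pm 1$ variables $x_{j_i}$, whose mean is $\langle w, x\rangle$, and apply Hoeffding with deviation $c\rho$ to get $2\exp(-mc^2\rho^2/2) \le \beta/T$. Your one-sided bound, your note that sampling requires $w \ge 0$, and your handling of the tie $\langle \tilde w, x\rangle = 0$ are small refinements; the paper's strict-inequality bad event glosses over the tie.
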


The proof appears in Appendix~\ref{appendix:winnow-match}.

Lemma~\ref{lemma: DP-Winnow wSwitchingBudget and ConfidentWinnow Updates Equivalence} below shows that that the upper bound on the number of updates required by $\mathsf{ConfidentWinnow}$ is asymptotically equivalent to the switching bound of $\mathsf{DP\text{-}Winnow}$.

\begin{lemma}\label{lemma: DP-Winnow wSwitchingBudget and ConfidentWinnow Updates Equivalence} 
 For any (obliviously chosen) sequence of examples consistent with a $\rho$-margin halfspace, with probability at least $1-\beta$, algorithm $\mathsf{DP\text{-}Winnow}$ performs fewer than $\log d / ((1-c)\eta\rho - \eta^2/2)$ updates. Thus, if we set $K = \log d / ((1-c)\eta\rho - \eta^2/2)$, the condition ``$k < K$'' in line~\ref{alg:DP-Winnowtest} will never be violated.
\end{lemma}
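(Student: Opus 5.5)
We plan to simulate $\mathsf{ConfidentWinnow}$ using the sequence of update rounds of $\mathsf{DP\text{-}Winnow}$. Each update is triggered by a cached mistake example $(\hat{x},\hat{y})$, and we show that this sequence of updates is a valid run of $\mathsf{ConfidentWinnow}$ for a suitable choice of adversarial bits. Lemma~\ref{lemma:ConfidentWinnowUpdatesUpperBound} then bounds the number of updates.

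Concretely, let $t_1<t_2<\dots$ be the rounds at which $\mathsf{DP\text{-}Winnow}$ updates, and let $(\hat{x}_i,\hat{y}_i)$ be the example used at update $i$. This is the first cached example in $C$ since the previous update, at some round $s_i\in[t_{i-1}+1,t_i]$. Between updates, the shadow vector $w$ is constant; write $w_{(i)}$ for its value just before update $i$. Similarly, $\tilde{w}$ is constant and is a sample from $w_{(i)}$ (for $i=1$, $\tilde{w}=w$ is the uniform vector). Since the labeled sequence is oblivious, we feed $\mathsf{ConfidentWinnow}$ the subsequence $(\hat{x}_1,\hat{y}_1),(\hat{x}_2,\hat{y}_2),\dots$ and set $b=1$ on every round. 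This subsequence depends on the algorithm's randomness, but Lemma~\ref{lemma:ConfidentWinnowUpdatesUpperBound} holds for arbitrary sequences, even ones chosen adaptively, as long as each point is consistent with the margin-$\rho$ halfspace $v$. Every example in the original stream satisfies this condition. $\mathsf{ConfidentWinnow}$ performs exactly the same multiplicative update as $\mathsf{DP\text{-}Winnow}$ does with the same $\eta$. So, by induction, its weight vector before processing $\hat{x}_i$ equals $w_{(i)}$, provided it actually updates on each of these rounds.

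The key step is therefore to check, for each $i$, that $\mathsf{ConfidentWinnow}$'s update condition holds at $(\hat{x}_i,\hat{y}_i)$. That is, either $\sgn\langle w_{(i)},\hat{x}_i\rangle\ne\hat{y}_i$, or $|\langle w_{(i)},\hat{x}_i\rangle|<c\rho$ (using $b=1$). Suppose neither holds. Then the shadow prediction is correct and confident: $|\langle w_{(i)},\hat{x}_i\rangle|\ge c\rho$ with the correct sign. By Lemma~\ref{lemma: high probability argument matching shadow and approximate weight vectors}, with probability $1-\beta/T$ the sampled $\tilde{w}$ agrees in sign with $w_{(i)}$ on $\hat{x}_i$, so $\tilde{w}$ predicts correctly. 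But $(\hat{x}_i,\hat{y}_i)$ was cached precisely because $\tilde{w}$ erred on it, which is a contradiction.

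The obstacle is making this high-probability statement uniform. The example $\hat{x}_i$ is selected after $\tilde{w}$ is sampled, so we cannot apply the lemma directly to a random point. To fix this, we apply Lemma~\ref{lemma: high probability argument matching shadow and approximate weight vectors} to each round $t$ separately, using that round's fixed (oblivious) $x^{(t)}$. For each round $t$ and each possible shadow vector, we condition on $w$ and note that the fresh sample $\tilde{w}$ is independent of the future inputs. So for each $t$, the event ``$w$ is confident on $x^{(t)}$ but $\tilde{w}$ disagrees in sign'' has probability at most $\beta/T$. This holds provided $m\ge 2\log(2T/\beta)/(c^2\rho^2)$, so we take $m$ as in the lemma, adjusting constants for $c$. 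A union bound over the $T$ rounds gives failure probability at most $\beta$.

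On the complementary event, every update of $\mathsf{DP\text{-}Winnow}$ is an update of $\mathsf{ConfidentWinnow}$. Lemma~\ref{lemma:ConfidentWinnowUpdatesUpperBound} (with $c<1/2$) then caps the count at $\log d/((1-c)\eta\rho-\eta^2/2)$. Setting $K$ to this value ensures the cap $k<K$ never binds; the strict inequality needs a trivial adjustment, such as rounding $K$ up.
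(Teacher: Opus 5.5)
This is essentially the paper's argument. You run $\mathsf{ConfidentWinnow}$ on the subsequence of update examples with $b^{(t)}=1$ throughout. You then use Lemma~\ref{lemma: high probability argument matching shadow and approximate weight vectors} with a union bound to show that every cached mistake of $\tilde w$ is wrong or unconfident for the shadow weights, so both runs keep identical weights, and finish with Lemma~\ref{lemma:ConfidentWinnowUpdatesUpperBound}. You are more careful about the union bound than the paper is.

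One caution on that union bound, which the paper also glosses over. The sample in force at round $t$ is the one that survived the SVT stopping rule, and that rule depends on the sample's own mistakes. So conditioning on $w$ does not leave $\tilde w$ a fresh draw, and your per-round bound of $\beta/T$ does not follow as justified. The clean fix is to union bound over pairs (sampling event, round), of which there are at most $T^2$. This only replaces $\log(2T/\beta)$ by $\log(2T^2/\beta)$ in $m$.
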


\begin{proof}

Fix a sequence of examples consistent with a $\rho$-margin halfspace. We compare a run of $\mathsf{DP\text{-}Winnow}$ (denoted here by $\mathcal{A}_1$) with a run of $\mathsf{ConfidentWinnow}$ (denoted here by $\mathcal{A}_2$) on the subsequence of examples on which $\mathcal{A}_1$ performs an update. Simplifying our notation to index these examples, let us denote them by $(x^{(1)}, y^{(1)}), \dots, (x^{(B)}, y^{(B)})$. 

By construction, these are only candidates for updates because each $\sgn(\langle \tilde{w}^{(t)}, x^{(t)}\rangle) \ne y^{(t)}$. By Lemma~\ref{lemma: high probability argument matching shadow and approximate weight vectors}, with probability at least $1-\beta$, we must have $\langle w^{(t)}, x^{(t)} \rangle \cdot y^{(t)} \le c\rho$ for all of these candidates. Thus, if $\mathcal{A}_2$ is instructed to update its weights when presented with each of these examples, it maintains the same sequence of weights $w^{(1)}, \dots, w^{(B)}$ as $\mathcal{A}_1$.  By Lemma~\ref{lemma:ConfidentWinnowUpdatesUpperBound}, the number of such updates is upper bounded by the stated quantity.
\end{proof}

\begin{proof}[Proof of Theorem~\ref{theorem: DP-Winnow privacy and regret}]\label{proof: DP-Winnow privacy and regret}

By Lemma~\ref{lemma: DP-Winnow wSwitchingBudget and ConfidentWinnow Updates Equivalence}, with probability $1-\beta$, Algorithm $\mathsf{DP\text{-}Winnow}$ performs at most $K = \frac{\log d}{(1-c)\eta\rho-\eta^2/2}$ updates, and only incurs prediction errors between those updates. The number of such prediction errors between updates is governed by the accuracy of $\mathsf{AboveThreshold}$ which, by Lemma~\ref{lemma: SVT properties}, guarantees at most $\frac{16\log(2T^2/\beta)}{\hat{\varepsilon}}$ prediction errors per update with total probability $1-\beta$. Hence, we have that with probability $1-2\beta$, the total number of prediction errors is at most $\frac{16\log(2T^2/\beta)}{\hat{\varepsilon}} \cdot K$.

By Theorem~\ref{thm:winnow-privacy}, it suffices to take $\hat{\varepsilon} = \varepsilon/(4\sqrt{2K \log (2/\delta)})$ and $\eta = \varepsilon/(8\sqrt{2mK\log(4K/\delta)})$ to guarantee $(\varepsilon, \delta)$-differential privacy overall. Fix $c = 1/2$ and recall that we are taking $m = 2\log(2T/\beta)/c^2\rho^2$, so $\eta < \rho / 2$. Hence, $K \le \frac{4\log d}{\eta \rho}$. Plugging in our setting of $\eta$ gives 
\[K \le \tilde{O} \left( \frac{(\log^2 d) m \log(1/\delta)}{\varepsilon^2 \rho^2}\right).\]
Thus, with probability at least $1-2\beta$, the total number of prediction errors is at most
\[\tilde{O}\left(\frac{\log^3 d \cdot \log^{5/2}(T/\beta) \cdot \log^2(1/\delta)}{\varepsilon^4 \rho^6}\right).\]
\end{proof}

While our analysis makes essential use of the realizability assumption, as does the classic Winnow algorithm, an interesting direction for future work would be to extend it to learn a drifting target as studied, e.g., by~\cite{Blum1998}. 

\subsection{Application: Privately Learning Decision Lists}

In this section, we explain how our Algorithm \ref{alg: DP-WinnowHalfSpace} for learning large-margin halfspaces yields a private algorithm for learning decision lists. This is based on a classic connection observed in work of~\citet{blumandsingh1990},~\citet{dhagat1994pac},~\citet{1999_Valiant}, and~\citet{NevoE03}.

In general, if a decision list has length $r$ and its output bits alternate between $+1$ and $-1$ at most $D$ times, then it is represented by a halfspace with large margin.

\begin{lemma}~\citep{blumandsingh1990, dhagat1994pac, 1999_Valiant}\label{lemma: decision-lists represent ltf}
    Every monotone 1-decision list over $d$ variables with length $r$ and $D$ alternations can be represented by a halfspace over $d+1$ variables with margin $\Omega(1/r^D)$.
\end{lemma}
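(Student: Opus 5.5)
We construct the halfspace explicitly, with weights decaying geometrically along the list, and then normalize. Write the decision list as $(\ell_1,b_1),\dots,(\ell_{r-1},b_{r-1}),b_r$, where each $\ell_i$ is a variable $x_{j_i}$. We encode inputs in $\{-1,1\}$: the literal fires when $x_{j_i}=1$, and the extra $(d+1)$-st coordinate is the constant $1$, which plays the role of the default rule $(T,b_r)$. We split the list into maximal blocks of consecutive rules sharing the same output bit. There are at most $D+1$ blocks, indexed $B_0,B_1,\dots,B_D$ from the bottom of the list upward, and every rule in block $B_k$ gets the same weight magnitude $W_k$. Using indicators $\mathbb{1}[x_{j}=1]=(1+x_j)/2$, we first build a polynomial threshold $p(x)=\sum_i b_i W_{k(i)}\,\mathbb{1}[x_{j_i}=1] + b_r W_{\text{def}}$. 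This is affine in $x$, so it is a halfspace over the $d+1$ coordinates after absorbing the constants into the bias coordinate. If a variable appears more than once, only its first occurrence matters, so we may assume variables are distinct.

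For correctness, let $i^*$ be the first rule that fires on $x$. All rules above $i^*$ contribute $0$. Rule $i^*$ contributes $b_{i^*}W_k$. Rules in the same block as $i^*$ contribute terms of the same sign, so they only help. Rules in lower blocks $B_{k'}$ with $k'<k$ contribute at most $|B_{k'}|W_{k'}\le rW_{k'}$ in absolute value. We therefore choose $W_0=W_{\text{def}}=1$ and $W_k=(2r)^{k}$, or more precisely $W_k \ge 2r\,W_{k-1}$. The lower blocks then sum to at most $r\sum_{k'<k}W_{k'}\le W_k/2$, so $b_{i^*}p(x)\ge W_k/2\ge 1/2$. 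When no rule fires, only the default term and nothing else contributes, and the margin is $1$.

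For normalization, the $\ell_1$ norm of the weight vector after converting indicators to $\pm1$ coordinates is $O(\sum_k |B_k| W_k)=O(r\cdot (2r)^D)$. This gives margin $\Omega(1/(r(2r)^D))=\Omega(1/r^{O(D)})$.

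The main obstacle is getting exactly $\Omega(1/r^D)$ instead of $r^{-(D+1)}2^{-D}$. Two refinements address this. First, the bound should charge lower blocks only by the number of rules they contain rather than by $r$ each. Second, we can take $W_k = 1+\sum_{k'<k}|B_{k'}|W_{k'}$, so that $W_k\le\prod_{k'<k}(1+|B_{k'}|)$. The total norm is then at most $\prod_k(1+|B_k|)$, and by AM-GM over the $D+1$ blocks this is at most $(1+r/(D+1))^{D+1}$. That is $O(r^D)$-type once constants depending on $D$ are absorbed, or once we note that the default block contributes only the bias. I would present this sharpened recursion and treat the conversion from $\{0,1\}$ indicators to $\pm1$ coordinates as a constant-factor change in margin.
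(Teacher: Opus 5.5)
The paper does not prove this lemma; it cites it. Your block-weighting construction is the standard one, and your correctness argument (first firing rule, same-block rules helping, lower blocks dominated) is fine. The gap is in the final count, which does not reach the claimed exponent.

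With $W_k = 1+\sum_{k'<k}|B_{k'}|W_{k'}$ you get $W_k=\prod_{k'<k}(1+|B_{k'}|)$ and $\|w\|_1\le \prod_{k=0}^{D}(1+|B_k|)-1$. The AM--GM bound $(1+r/(D+1))^{D+1}$ is $\Theta(r^{D+1})$ for fixed $D$, and absorbing constants that depend on $D$ cannot lower the exponent. This is not just slack in your estimate. For $D=0$ (a constant list of length $r$), your weights give $\|w\|_1=r$ and margin exactly $1/r$ rather than $\Omega(1)$. For $D=1$ with two blocks of size $r/2$, they give margin about $4/r^2$ rather than $\Omega(1/r)$. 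So as written you prove $\Omega(1/r^{D+1})$, not $\Omega(1/r^D)$. Your closing remark that ``the default block contributes only the bias'' points at the fix, but it is false as stated: $B_0$ generally contains non-default rules with output $b_r$, each costing weight $1$ and inflating $W_1=1+|B_0|$.

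The missing observation is that those rules are redundant. Every rule in the last maximal block outputs the same bit as the default, so deleting them (equivalently, giving them weight $0$) leaves the function unchanged and does not disturb your correctness argument. Then $B_0$ is the default alone with $W_0=1$, and the remaining at most $D$ blocks contain at most $r-1$ rules. This gives $\|w\|_1\le 2\prod_{k=1}^{D}(1+|B_k|)-1\le 2\bigl(1+\tfrac{r-1}{D}\bigr)^{D}\le 2r^D$ for $D\ge 1$; for $D=0$ the function is constant with margin $1$. Every input still has unnormalized margin at least $1$, so the normalized margin is at least $1/(2r^D)$.

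There is also a minor slip in your first construction. With $W_k=(2r)^k$, the inequality $r\sum_{k'<k}W_{k'}\le W_k/2$ fails for $k\ge 2$. The correct bound is that the lower blocks contain at most $r$ rules in total, each of weight at most $W_{k-1}$, so they contribute at most $rW_{k-1}\le W_k/2$.
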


However, the weight vector for the resulting halfspace might have negative entries. For our $\mathsf{DP\text{-}Winnow}$ algorithm to apply, we apply a simple transformation to convert a halfspace over $d$ dimensions (with possibly negative entries in its weight vector) into one over $2d$ dimensions $z_1, \dots, z_{2d}$ with only nonnegative entries by setting

\[
w'_i = \begin{cases}
|w_i| &  \text{if } i \le d, w_i \ge 0 \\
|w_{i-d}| &  \text{if } i > d, w_i < 0\\
0 & \text{otherwise}.
\end{cases}
\qquad
z_i = \begin{cases}
x_i &  \text{if } i \le d\\
-x_{i-d} & \text{otherwise}.
\end{cases}
\]

Note that the resulting halfspace has the same margin as the original halfspace. 

\begin{theorem}
Algorithm $\mathsf{DP\text{-}Winnow}$ learns the class of length-$r$ monotone 1-decision lists with $D$ alternations while making at most
 \[\tilde{O}\left(\frac{r^{6D} \cdot \log^3 d \cdot \log^{5/2}(T/\beta) \cdot \log^2(1/\delta)}{\varepsilon^4}\right)\]
 prediction errors with probability at least $1-O(\beta)$.
\end{theorem}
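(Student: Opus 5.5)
The plan is to reduce to Theorem~\ref{theorem: DP-Winnow privacy and regret} via Lemma~\ref{lemma: decision-lists represent ltf} and the nonnegativity transformation described just above. Throughout, we regard the unknown decision list as a fixed target, and the oblivious adversary supplies a sequence of examples labeled by it.

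\textbf{Step 1: a halfspace with margin $\rho$.} By Lemma~\ref{lemma: decision-lists represent ltf}, the target length-$r$ monotone 1-decision list with $D$ alternations is computed by a halfspace over $d+1$ variables with margin $\rho = \Omega(1/r^D)$. Here the extra coordinate is a constant $1$ feature implementing the default bit. We then normalize the weight vector to unit $\ell_1$ norm; the margin in the lemma is understood with respect to this normalization. If the lemma is phrased with $\{0,1\}$-valued literals, we first rewrite each $x_i \in \{0,1\}$ as $(1+x_i')/2$ with $x_i' \in \{-1,1\}$. This is an affine change that is absorbed into the constant coordinate and changes the margin by at most a constant factor after renormalizing.

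\textbf{Step 2: nonnegative weights.} Next we apply the doubling transformation to get a nonnegative weight vector $w'$ over $2(d+1)$ coordinates, fed with $z = (x,-x)$. For every $x$ we have $\langle w', z\rangle = \langle w, x\rangle$ and $\|w'\|_1 = \|w\|_1 = 1$, so the margin is preserved. The learner runs $\mathsf{DP\text{-}Winnow}$ on the transformed examples $z^{(t)}$. This is a fixed, data-independent preprocessing applied to each example, so neighboring input sequences map to neighboring sequences. Privacy therefore follows from Theorem~\ref{thm:winnow-privacy} by post-processing, and a hypothesis on $z$ induces one on $x$.

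\textbf{Step 3: plug in.} The transformed example sequence is consistent with a $\rho$-margin nonnegative halfspace, which is exactly the hypothesis of Lemma~\ref{lemma:ConfidentWinnowUpdatesUpperBound}. Hence Theorem~\ref{theorem: DP-Winnow privacy and regret} applies with dimension $d' = 2d+2$ and $\rho = \Omega(r^{-D})$. This gives, with probability $1-2\beta = 1-O(\beta)$, a mistake bound of
\[\tilde{O}\left(\frac{\log^3 d' \cdot \log^{5/2}(T/\beta)\cdot\log^2(1/\delta)}{\varepsilon^4\rho^6}\right).\]
Substituting $\rho^{-6} = O(r^{6D})$ and $\log d' = O(\log d)$ yields the claimed bound.

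\textbf{Main obstacle.} The only delicate point is verifying that the margin from Lemma~\ref{lemma: decision-lists represent ltf} really is $\Omega(1/r^D)$ under the paper's $\ell_1$-normalized, $\{-1,1\}^d$-domain convention. The doubled coordinates must also keep the domain of the form $\{-1,1\}^{2d+2}$, which they do since $z$ has $\pm1$ entries. If the cited construction gives weights of size up to $r^{D}$ with constant margin, we would renormalize by the $\ell_1$ norm, which is $O(r^D)$ because there are $r$ terms with geometrically grouped weights. That recovers margin $\Omega(r^{-D})$. Everything else is direct substitution.
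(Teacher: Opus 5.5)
Your proposal is correct and follows the route the paper intends; the paper gives no separate proof beyond the surrounding discussion. You take the margin-$\Omega(r^{-D})$ halfspace from Lemma~\ref{lemma: decision-lists represent ltf}, apply the nonnegativity doubling so that the hypothesis of Lemma~\ref{lemma:ConfidentWinnowUpdatesUpperBound} holds in dimension $O(d)$, and substitute $\rho^{-6}=O(r^{6D})$ into Theorem~\ref{theorem: DP-Winnow privacy and regret}. Your ``main obstacle'' fallback is unnecessary, since the lemma is already stated with the paper's $\ell_1$-normalized margin. As written its heuristic is also loose: $r$ terms of weight up to $r^{D}$ would naively give $\ell_1$ norm $O(r^{D+1})$, so the $O(r^{D})$ bound actually comes from counting the weight block-by-block across the $D$ alternations.
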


In general, if instead of taking each $\ell_i$ to be a variable, one takes it to be a member of some feature space $\mathcal{F}$, then we can use the same algorithm to obtain

\begin{theorem}
Algorithm $\mathsf{DP\text{-}Winnow}$ learns the class of length-$r$ $\mathcal{F}$-decision lists with $D$ alternations while making at most
 \[\tilde{O}\left(\frac{r^{6D} \cdot \log^3 |\mathcal{F}| \cdot \log^{5/2}(T/\beta) \cdot \log^2(1/\delta)}{\varepsilon^4}\right)\]
 prediction errors with probability at least $1-O(\beta)$.
\end{theorem}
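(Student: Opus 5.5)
The plan is a reduction: encode an $\mathcal{F}$-decision list as a large-margin halfspace over a feature map and run $\mathsf{DP\text{-}Winnow}$ on the transformed examples. Given an example $x \in \mathcal{X}$, I will map it to the feature vector $\phi(x) = (f(x))_{f \in \mathcal{F}} \in \{0,1\}^{|\mathcal{F}|}$. After rescaling to $\{-1,1\}$ via $u = 2\phi(x)-1$ and appending a constant coordinate, the rescaling is absorbed into the threshold. The target $\mathcal{F}$-decision list $[(f_1,b_1),\dots,(f_r,b_r)]$ then becomes a monotone 1-decision list of length $r$ with the same $D$ alternations over the $|\mathcal{F}|$ feature variables.

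The steps, in order, are as follows.

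\begin{enumerate}
\item Apply Lemma~\ref{lemma: decision-lists represent ltf} to obtain a halfspace over $|\mathcal{F}|+1$ variables with margin $\rho = \Omega(1/r^D)$.
\item Verify that the margin survives the change of encoding. Switching between $\{0,1\}$ and $\{-1,1\}$ inputs changes the weights and the bias by a factor of at most $2$ in $\ell_1$ norm, so after renormalizing to $\|v\|_1=1$ the margin degrades by only a constant factor. This needs care: Lemma~\ref{lemma: decision-lists represent ltf} is stated for the paper's encoding, so I would reuse it directly and only check the normalization.
\item Apply the nonnegativity transformation $w \mapsto w'$, $x \mapsto z$ above. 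This doubles the dimension to $2(|\mathcal{F}|+1)$, preserves the margin, and yields a nonnegative unit-$\ell_1$ weight vector, as required by Lemma~\ref{lemma:ConfidentWinnowUpdatesUpperBound}.
\item Observe that the adversary's sequence remains oblivious after preprocessing. The map $x \mapsto z(\phi(x))$ is a fixed deterministic function applied per example, so neighboring input sequences map to neighboring transformed sequences. Hence privacy follows from Theorem~\ref{thm:winnow-privacy} by post-processing, and hypotheses $\tilde w$ on $z$-space correspond to hypotheses on $\mathcal{X}$.
\item Invoke Theorem~\ref{theorem: DP-Winnow privacy and regret} with $d' = 2(|\mathcal{F}|+1)$ and $\rho = \Omega(1/r^D)$. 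Since $\rho^{-6} = O(r^{6D})$ and $\log^3 d' = O(\log^3|\mathcal{F}|)$, this gives the claimed mistake bound with probability $1-2\beta = 1-O(\beta)$.
\end{enumerate}

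I expect the main obstacle to be step 2: pinning down that the margin from Lemma~\ref{lemma: decision-lists represent ltf} holds in the $\ell_1$-normalized, $\{-1,1\}$-input sense used by $\mathsf{DP\text{-}Winnow}$. If the lemma's construction uses weights $w_i \propto (\pm) r^{-(\text{block index})}$ in $\{0,1\}$ coordinates, I would redo the computation directly. The weights decrease geometrically across each alternation block, so the first firing feature dominates all later ones by a constant gap. Dividing by the total $\ell_1$ mass, which is $O(r^D)$ times the smallest weight, gives margin $\Omega(1/r^D)$. The conversion to $\pm1$ inputs then only shifts the bias, which is bounded by the same $\ell_1$ mass.
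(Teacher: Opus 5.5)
Your proposal is correct and follows the paper's argument. The paper justifies this theorem in one line: treat the $\mathcal{F}$-decision list as a monotone 1-decision list over the feature values, apply Lemma~\ref{lemma: decision-lists represent ltf} to get margin $\Omega(1/r^D)$, apply the nonnegativity transformation (dimension $2(|\mathcal{F}|+1)$), and substitute $\rho^{-6}=O(r^{6D})$ and $\log d=O(\log|\mathcal{F}|)$ into Theorem~\ref{theorem: DP-Winnow privacy and regret}. Your extra checks are details the paper leaves implicit, and they hold: the $\{0,1\}$ versus $\{-1,1\}$ encoding changes the $\ell_1$-normalized margin by only a constant factor, and the per-example feature map preserves obliviousness and maps neighboring sequences to neighboring sequences.
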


\subsection{Lower Bound} \label{section:lower}

Let $\mathcal{P} = \{p_t : [T] \to \{-1, 1\}\}$ be the class of point functions over a domain $[T]$ defined by $p_t(x) = 1 \iff x = t$.~\citet{Cohen24} showed that every $(\varepsilon = 1/2, \delta = 1/(20 \log T))$-differentially private online learner for $\mathcal{P}$ incurs at least $\Omega(\log T)$ mistakes on some sequence of $T$ examples.

We now argue that this implies a lower bound of $\Omega(\min\{\log d, \log T\})$ for privately learning halfspaces with margin $\rho = 1$. In particular, this implies that for $T \ll d$, a polylogarithmic dependence on $T$ as in the statement of Theorem~\ref{theorem: DP-Winnow privacy and regret} is necessary. To see this, observe that we can embed point functions into the class of margin-$1$ halfspaces in dimension $d = T$ by mapping each domain point $x \in [T]$ to the point $(1, 1, \dots, 1, -1, 1, \dots, 1)$, where the $-1$ appears in the $x$'th coordinate, and mapping each function $p_t$ to the halfspace whose weight vector is the $t$'th standard basis vector $(0, 0, \dots, 0, 1, 0, \dots 0)$.

\acks{MB was supported by NSF CNS-2046425. WF thanks Debanuj Nayak and Satchit Sivakumar for helpful discussions.

}

\bibliography{referencesalt}

\appendix


\section{Proof of Theorem~\ref{theorem: zCDP DP-GreedyCover}}
\label{appendix:greedy-cover}

Our analysis underlying Theorem~\ref{theorem: zCDP DP-GreedyCover} goes by way of the following roadmap.

\begin{enumerate}
    \item \textbf{$\mathsf{EMCover}$ Framework}: We introduce $\mathsf{EMCover}$, a generic learning framework that $\mathsf{DP\text{-}GreedyCover}$ instantiates. $\mathsf{EMCover}$ employs the Exponential Mechanism to iteratively select hypotheses, with each selection dependent on previous rounds. We state the precise privacy guarantees in Lemma \ref{lemma:Quality constraint implies DP}. The pseudocode of $\mathsf{EMCover}$ is shown in Algorithm~\ref{alg:EMCoverLearner}. 
    
    \item \textbf{Martingale Argument for Privacy Analysis}: To prove the privacy claim established in Lemma~\ref{lemma:Quality constraint implies DP}, we invoke the martingale argument of Lemma~\ref{lemma:random-process} (detailed in~\citet{gupta2009differentially, DBLP:journals/corr/abs-1902-05017}).
    
    \item \textbf{Empirical Error of $\mathsf{DP\text{-}GreedyCover}$}: In Lemma~\ref{lemma: DP-GreedyCover Empirical err}, we show that $\mathsf{DP\text{-}GreedyCover}$ does not incur too much error with respect to its given sample.
    
    \item \textbf{Empirical Error to Generalization Error}: Theorem~\ref{theorem: VC Dimension Generalization Bound} shows that when given a sample of size above the VC dimension, our algorithm's small empirical error translates into small generalization error. 
\end{enumerate}

\subsection{Privacy Analysis}

We consider the following general framework for covering algorithms based on the Exponential Mechanism. This framework abstracts and generalizes a presentation of~\citet{DBLP:journals/corr/abs-1902-05017}, who studied a similar algorithm for learning sparse disjunctions and conjunctions. Let $Z$ be a data universe, let $H$ be a set of candidates, and let $Q$ be a family of score functions of the form $q: Z^* \times H \rightarrow \mathbb{R}$. For each $z \in Z$, let $H_z$ be the subset of $H$ of candidates that ``covers'' data point $z$. We are interested in EM-based cover algorithms that take the following form:

\begin{algorithm2e}[H]
\caption{$\mathsf{EMCover}$}
\label{alg:EMCoverLearner}
\SetAlgoLined
\LinesNumbered
\DontPrintSemicolon
\KwIn{ Dataset $S \in Z^*$, privacy parameter $\varepsilon$ }
\KwOut{List of functions $h_1, \dots, h_T$}
Initialize: Let $\mathcal{F}_1 = \mathcal{F}$ together with the constant true \;
\For{$t = 1, \dots, T$}{
        Select a score function $q_t$ from $Q$ (arbitrarily, and possibly adaptively, based on the outcomes of previous iterations)\;
        Invoke the Exponential Mechanism to obtain $h_t \leftarrow \mathcal{M}_E(q_t, S, \varepsilon)$\;
        Update $S$ by removing all elements $z$ that are covered by $h_t$\;
}
\end{algorithm2e}

When $Q$ is structured, this algorithm gives privacy guarantees that are much stronger than what one might expect based on standard composition theorems. 

\begin{lemma}\label{lemma:Quality constraint implies DP}
If the family of quality functions $Q$ has the property that for every $q \in Q$, every $h \in H$, and every $z \in Z$:

\begin{enumerate}
\item If $h$ covers $z$, then for every dataset $S$, we have  $q(S, h) - 1 \le q(S \cup \{z\}, h) \le q(S, h) + 1$, and
\item If $h$ does not cover $z$, then for every dataset $S$, we have  $q(S \cup \{z\}, h) = q(S, h)$,
\end{enumerate}
then for every $\delta > 0$, algorithm $\mathsf{EMCover}$ is $ (2\varepsilon(\ln(1/\delta) + \frac{3}{2}), \delta)$-differentially private. 
\end{lemma}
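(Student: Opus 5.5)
We follow the privacy analysis of \citet{gupta2009differentially} for private set cover. Fix neighboring datasets $S' = S \cup \{z\}$ and $S$. By the standard reduction between add/remove and replace neighbors, this suffices up to the constant factor already present in the statement. Since the selection of $q_t$ is an arbitrary adaptive function of past outputs, we may fix an outcome sequence $h_1,\dots,h_T$ and compare its probabilities under the two runs.

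Let $t^*$ be the first index with $h_{t^*} \in H_z$, meaning $h_{t^*}$ covers $z$. For $t > t^*$, the point $z$ has been removed from the current dataset in the $S'$ run. Both runs therefore have identical current datasets and identical score functions, so these rounds contribute a probability ratio of exactly $1$.

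For $t \le t^*$, let $S_t$ denote the current dataset in the $S$ run, so the $S'$ run has $S_t\cup\{z\}$. Write $Z_t(A)=\sum_h \exp(\varepsilon q_t(A,h)/2)$ for the normalizing constant; we adopt the convention of sampling proportional to $\exp(\varepsilon q/2)$ for score maximization. The per-round ratio of the probability of $h_t$ under $S'$ versus $S$ splits into a numerator term and a normalizer term.

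The numerator term is $\exp(\varepsilon(q_t(S_t\cup\{z\},h_t)-q_t(S_t,h_t))/2)$. By property 2 this equals $1$ for $t<t^*$, because $h_t$ does not cover $z$. It is at most $e^{\varepsilon/2}$ at $t=t^*$, by property 1.

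The normalizer term is $Z_t(S_t)/Z_t(S_t\cup\{z\})$. By the two properties, only candidates in $H_z$ change score, each by at most $1$. Writing $p_t = \Pr_{S}[h_t\in H_z \text{ at round } t]$ for the probability the $S$-run mechanism picks a covering candidate, the normalizer ratio is at most
\[ \bigl(1-p_t + p_t e^{\varepsilon/2}\bigr)^{\pm 1}, \]
which lies between $e^{-\varepsilon p_t}$ and $e^{\varepsilon p_t}$ for $\varepsilon\le 1$. This follows from $e^{\varepsilon/2}-1\le \varepsilon$ and the symmetric lower bound.

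Multiplying the rounds together, the log privacy loss is at most $\varepsilon/2 + \varepsilon\sum_{t< t^*} p_t$ in absolute value; the reverse direction is symmetric.

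The main obstacle is bounding $\sum_{t<t^*}p_t$. Here we invoke the martingale argument of Lemma~\ref{lemma:random-process}. The process is a sequence of rounds where, conditioned on the past, round $t$ "stops" (selects a covering $h_t$) with probability $p_t$, and we sum the $p_t$ over rounds before stopping. The stopped sum $\sum_{t<t^*}p_t$ has geometric-like tails: $\Pr[\sum_{t<t^*}p_t > \ln(1/\delta)+c]\le e^{-\ln(1/\delta)}\cdot O(1)$. This holds because $\prod_{t}(1-p_t)\le e^{-\sum p_t}$ bounds the probability of surviving that many rounds without stopping, and the adaptivity is handled by the supermartingale $\exp(\sum_{s\le t} p_s)\mathbf{1}[\text{not stopped}]$.

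The bad event has probability at most $\delta$, and we charge it to the additive $\delta$. Off this event, the privacy loss is at most $\varepsilon/2+\varepsilon(\ln(1/\delta)+1)\le 2\varepsilon(\ln(1/\delta)+\tfrac32)$. The slack in the constant absorbs the neighbor-relation conversion and the $\varepsilon\le 1$ approximations.

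Finally, we convert the high-probability bound on the privacy-loss random variable to $(\varepsilon',\delta)$-DP in the standard way. The bad event is defined with respect to the $S$ run, and we apply the argument in both directions to get both inequalities of the definition.
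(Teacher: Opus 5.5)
Your proposal is correct and follows essentially the same route as the paper. Both arguments fix an output sequence, isolate the first covering index, bound the numerator via property 1 and each normalizer ratio by $e^{O(\varepsilon p_t)}$, charge outputs with large $\sum_{t<t^*} p_t$ to $\delta$ via the martingale lemma of Gupta et al., and repeat the argument with the $S'$-run probabilities for the reverse direction. One small caveat: your remark that the constant's slack absorbs the add/remove-to-replacement conversion holds only for the $\varepsilon$ part, since the $\delta$ term degrades under that conversion; the paper likewise only treats $S' = S\cup\{z\}$, so this does not affect the comparison.
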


\begin{proof}
Let $S$ be any dataset and let $S' = S \cup \{z\}$ for some $z \in Z$. Fix a sequence of candidate outputs $h_1, \dots, h_T$. Let $t$ be the smallest index for which $h_t$ covers $z$ (or $t = T$ otherwise). 

Case (a)
\begin{align*}
\frac{\Pr[\mathsf{EMCover}(S) = (h_1, \dots, h_T)]}{\Pr[\mathsf{EMCover}(S') = (h_1, \dots, h_T)]}
&=\frac{e^{\varepsilon q_t(S,h_t))}}{e^{\varepsilon q_t(S',h_t))}} \cdot \prod_{i = 1}^{t}\frac{\sum_{h \in H} e^{\varepsilon q_i(S', h)}}{\sum_{h \in H} e^{\varepsilon q_i(S, h)}}\\
&\le e^{\varepsilon} \cdot \prod_{i = 1}^t \frac{\sum_{h \in H_z} e^\varepsilon \cdot e^{\varepsilon q_i(S, h)} + \sum_{h \notin H_z} e^{\varepsilon q_i(S, h)}}{\sum_{h \in H} e^{\varepsilon q_i(S, h)}} \\ 
&= e^{\varepsilon} \cdot \prod_{i = 1}^t (1 + (e^\varepsilon - 1) p_i(S;z))\\
&\le e^{\varepsilon} \cdot \exp\left(\sum_{i = 1}^t2\varepsilon p_i(S;z)\right).
\end{align*}
The first equality uses the facts that i) after candidate $h_t$ is realized, the datasets consisting of uncovered elements in the two executions are identical, and ii) before the realization of $h_t$, by condition (2), scores for both datasets are the same. The first inequality is an immediate result of condition (1). Next, let $p_i(S;z)$ be the probability that the next candidate drawn covers $z$ when the dataset is $S$, conditioned on realizing $h_1, \dots, h_{i-1}$ in the previous iterations. Then the second inequality stems from the fact that for every $\varepsilon > 0$ and $x \in [0, 1]$, $1+(e^{\varepsilon}-1)x \le e^{2\varepsilon x}$. 

For some dataset $S$ and an example $z$, we call an output $\orgvec{h} = (h_1, h_2, ..., h_T)$ $\lambda$-bad if $\sum_{i = 1}^{T}p_i(S;z) \cdot \mathbbm{1}[\forall j \le i, h_j \text{ does not cover } z] > \lambda$. Otherwise, the output $\orgvec{h}$ is $\lambda$-good. We first consider the case when the output $\orgvec{h}$ is $\ln(\frac{1}{\delta})$-good. Then

\begin{align*}
\sum_{i=1}^{t-1}p_i(S;z)\le \sum_{i = 1}^{T}p_i(S;z) \cdot \mathbbm{1}[\forall j \le i, h_j \text{ does not cover } z] \le \ln(\frac{1}{\delta}).
\end{align*}

Thus, for every $\ln(\frac{1}{\delta})$-good output $h$, we have
\begin{align*}
\frac{\Pr[\mathsf{EMCover}(S) = (h_1, \dots, h_T)]}{\Pr[\mathsf{EMCover}(S') = (h_1, \dots, h_T)]} 
&\le e^{\varepsilon} \cdot e^{2\varepsilon \cdot (\ln(\frac{1}{\delta})+p_t(S;z))}\\
&\le e^{\varepsilon} \cdot e^{2\varepsilon \cdot (\ln(\frac{1}{\delta})+1)}\\
&\le e^{2\varepsilon \cdot (\ln(\frac{1}{\delta})+\frac{3}{2})}.
\end{align*}

Next, by invoking a martingale argument of~\citet{gupta2009differentially}, we show that the probability that $\mathsf{EMSetCover}$ outputs a $\ln(\frac{1}{\delta})$-bad output is bounded by at most $\delta$. 

\begin{lemma}~\citep{gupta2009differentially} \label{lemma:random-process}
Consider the $T$-round random process where an adversary chooses $p_i \in [0, 1]$ in each round possibly adaptively based on the previous $i-1$ rounds, and a coin is tossed with probability $p_i$ of realizing heads. Let $N_i$ be the indicator for the event that no heads are realized within the first $i$ steps. Defining the random variable $Y = \sum_{i=1}^T p_iN_i$,  we have $\Pr[Y > y] \leq \exp(-y)$ for every $y > 0$.
\end{lemma}

We map the setting of Lemma~\ref{lemma:random-process} to the execution of $\mathsf{EMCover}$ as follows. When selecting a candidate $h_i$ in round $i$, the algorithm first tosses a coin with heads probability of $p_i(S, z)$ to decide whether to pick a candidate that covers data point $z$ or not. Then, $\mathsf{EMCover}$ uses a second source of randomness to determine the output $h_i$, sampling it according to the right conditional probability based on the coin's outcome.

Therefore, for all sets of outputs $R$,
\begin{align*} 
\Pr[\mathsf{EMCover}(S) \in R] &= \sum_{\orgvec{h} \in R} \Pr[\mathsf{EMCover}(S) = \orgvec{h}] \\ 
&= \sum_{\substack{\orgvec{h} \in R, \\ \ln(\frac{1}{\delta})-\textrm{good for }S}} \Pr[\mathsf{EMCover}(S) = \orgvec{h}] + \sum_{\substack{\orgvec{h} \in R, \\ \ln(\frac{1}{\delta})-\textrm{bad for }S}} \Pr[\mathsf{EMCover}(S) = \orgvec{h}] \\ 
&\le \sum_{\orgvec{h} \in R, \ \ln(\frac{1}{\delta})-\textrm{good for }S} e^{2\varepsilon \cdot (\ln(\frac{1}{\delta})+\frac{3}{2})} \Pr[\mathsf{EMCover}(S') = \orgvec{h}] + \delta \\ 
&\le e^{2\varepsilon \cdot (\ln(\frac{1}{\delta})+\frac{3}{2})} \Pr[\mathsf{EMCover}(S') \in R] + \delta.
\end{align*}

Studying the opposite ratio proceeds similarly.

Case (b)
\begin{align*}
\frac{\Pr[\mathsf{EMCover}(S') = (h_1, \dots, h_T)]}{\Pr[\mathsf{EMCover}(S) = (h_1, \dots, h_T)]}
&=\frac{e^{\varepsilon q_t(S',h_t))}}{e^{\varepsilon q_t(S,h_t))}} \cdot \prod_{i = 1}^{t}\frac{\sum_{h \in H} e^{\varepsilon q_i(S, h)}}{\sum_{h \in H} e^{\varepsilon q_i(S', h)}}\\
&\le e^{\varepsilon} \cdot \prod_{i = 1}^t \frac{\sum_{h \in H_z} e^\varepsilon \cdot e^{\varepsilon q_i(S', h)} + \sum_{h \notin H_z} e^{\varepsilon q_i(S', h)}}{\sum_{h \in H} e^{\varepsilon q_i(S', h)}} \\ 
&= e^{\varepsilon} \cdot \prod_{i = 1}^t (1 + (e^\varepsilon - 1) p_i(S';z))\\
&\le e^{\varepsilon} \cdot e^{\sum_{i = 1}^t2\varepsilon p_i(S';z)}\\
\end{align*}
Similar to Case (a), the first equality uses the facts that i) after candidate $h_t$ is realized, the datasets of uncovered elements in the two executions are identical, and ii) before the realization of $h_t$, by condition (2), scores for both datasets are the same. The first inequality is derived from the double inequality for quality scores from (1). Then, we denote the probability that the next candidate drawn covers $z$ when the dataset is $S'$ conditioned on realizing $h_1, \dots, h_{i-1}$ in the previous iterations as $p_i(S';z)$. Again using the fact that for every $\varepsilon > 0$ and $x \in [0, 1]$, $1+(e^{\varepsilon}-1)x \le e^{2\varepsilon x}$, we obtain the second inequality.

Next, we perform an analogous case analysis on whether the output $\orgvec{h}$ is $\ln(\frac{1}{\delta})$-good and bad.

If $\orgvec{h} = (h_1, \dots, h_T)$ is  $\ln(\frac{1}{\delta})$-good, we have 
\begin{align*}
\frac{\Pr[\mathsf{EMCover}(S') = (h_1, \dots, h_T)]}{\Pr[\mathsf{EMCover}(S) = (h_1, \dots, h_T)]} 
&\le e^{\varepsilon} \cdot e^{2\varepsilon \cdot (\ln(\frac{1}{\delta})+p_t(S';z))}\\
&\le e^{\varepsilon} \cdot e^{2\varepsilon \cdot (\ln(\frac{1}{\delta})+1])}\\
&\le e^{2\varepsilon \cdot (\ln(\frac{1}{\delta})+\frac{3}{2})}
\end{align*}
Otherwise, using Lemma~\ref{lemma:random-process}, we have $ \sum_{\substack{\orgvec{h} \in R, \ \ln(\frac{1}{\delta})-\textrm{bad for } S'}}\Pr[\mathsf{EMCover}(S') = \orgvec{h}] \le \delta$. Thus, for every set of outcomes $R$,
\begin{align*} 
\sum_{\orgvec{h} \in R} \Pr[\mathsf{EMCover}(S') = \orgvec{h}] &= \sum_{\substack{\orgvec{h} \in R, \ \ln(\frac{1}{\delta})-\textrm{good for }S'}} \Pr[\mathsf{EMCover}(S') = \orgvec{h}] \\
&\phantom{{}={}} + \sum_{\substack{\orgvec{h} \in R, \ \ln(\frac{1}{\delta})-\textrm{bad for }S'}} \Pr[\mathsf{EMCover}(S') = \orgvec{h}] \\ 
&\phantom{{}\le{}} \sum_{\substack{\orgvec{h} \in R, \ \ln(\frac{1}{\delta})-\textrm{good for }S'}} e^{2\varepsilon \cdot (\ln(\frac{1}{\delta})+\frac{3}{2})} \Pr[\mathsf{EMCover}(S) = \orgvec{h}] + \delta \\ 
&\le e^{2\varepsilon \cdot (\ln(\frac{1}{\delta})+\frac{3}{2})} \Pr[\mathsf{EMCover}(S) \in R] + \delta.
\end{align*}
\end{proof}

\subsection{Utility Analysis}

Lemma~\ref{lemma: DP-GreedyCover Empirical err} describes the empirical error bound for $\mathsf{DP\text{-}GreedyCover}$.

\begin{lemma}\label{lemma: DP-GreedyCover Empirical err}
     Fix a target function $c^*\in \mathcal{F}$-decision lists with feature set size of $M$, and consider the execution of $\mathsf{DP\text{-}GreedyCover}$ on a sample $S = \{ (x_i, \sigma_i)\}_{i = 1}^n$. Assume $\beta >0$, and for every iteration $j$. Then, with probability at least $1-\beta$, the algorithm errs on at most $\frac{4M}{\varepsilon}(\log({\sqrt{\frac{2}{\beta}}M}))$ examples in $S$. 
\end{lemma}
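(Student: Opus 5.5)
The plan is to bound, iteration by iteration, the number of examples in $S$ that the output list misclassifies, and then union bound over the $M+1$ iterations.

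The first step is to locate where errors can occur. An example $x\in S$ is classified by the first rule $(f_j,b_j)$ in $h_{fin}$ with $f_j(x)=1$. By construction, $x\in S_j$ at that point, since it was not captured by any earlier rule. So $x$ is misclassified exactly when $x\in S_j^{1-b_j}$ and $f_j(x)=1$. These are precisely the examples counted by $-q(S_j,(f_j,b_j)) = \#_{f_j\to 1}(S_j^{1-b_j})$. Every example is eventually captured, because the constant function $T$ lies in $\mathcal{F}_1$ and is removed only once selected; once $T$ is selected, $S_{j+1}=\emptyset$. Hence
\[
\#\{\text{errors on } S\} \le \sum_{j} \bigl(-q(S_j,(f_j,b_j))\bigr).
\]
Iterations in which $S_j$ is empty contribute nothing.

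The second step shows that each iteration has a zero-error candidate. By Proposition~\ref{proposition:consistent k-decision list}, at every iteration $j$ there is a decision list $c_j$ over $\mathcal{F}_j$ that is consistent with $S_j$. Let $(f,b)$ be the first rule of $c_j$ that fires on some example of $S_j$; if none fires, take the default rule $(T,b_{m+1})$, which must also be available, or argue that the trivial case holds. Every $x\in S_j$ with $f(x)=1$ is labeled $b$ by $c_j$, hence has true label $b$. So $q(S_j,(f,b))=0$ and $\max q = 0$.

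The third step applies Proposition~\ref{proposition: Exponential Mechanism Utility} at each iteration. The candidate set has size $|\mathcal{F}_j\times\{0,1\}|\le 2(M+1)$. With probability $1-e^{-\tau}$,
\[
-q(S_j,(f_j,b_j)) \le \frac{2}{\varepsilon}\bigl(\log(2(M+1))+\tau\bigr).
\]
This holds conditionally on the history, since the argument of the second step applies to every realized history. I would choose $\tau=\log(M/\beta)$ so that a union bound over at most $M$ iterations (or $M+1$, adjusting constants) gives total failure probability $\beta$. Summing over the iterations yields roughly $\frac{2M}{\varepsilon}\log(2M^2/\beta)$. Since $\log(2M^2/\beta)=2\log(\sqrt{2/\beta}\,M)$, this equals $\frac{4M}{\varepsilon}\log(\sqrt{2/\beta}\,M)$, which matches the stated form after absorbing $M+1$ versus $M$ into the constants or the indexing.

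The main obstacle is the second step, specifically verifying that the consistent list supplied by Proposition~\ref{proposition:consistent k-decision list} really produces a zero-score pair in $\mathcal{F}_j$. A consistent decision list may begin with rules that fire on no element of $S_j$, so one has to argue that a genuinely firing rule is available in $\mathcal{F}_j$. Handling the default rule $T$ cleanly, including the case where $T$ has already been consumed, requires care in the bookkeeping. The constant-matching in the third step is routine by comparison.
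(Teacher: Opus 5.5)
Your proposal is correct and follows essentially the same route as the paper: use Proposition~\ref{proposition:consistent k-decision list} to get a zero-score pair at every iteration, apply the Exponential Mechanism utility bound with $\tau=\log(M/\beta)$ and a union bound over the iterations, and sum the per-iteration error counts $\#_{f_j\to 1}(S_j^{1-b_j})$ to obtain $\frac{2M}{\varepsilon}\log(2M^2/\beta)=\frac{4M}{\varepsilon}\log(\sqrt{2/\beta}\,M)$. The step you flag as the main obstacle is actually immediate: the first rule of the consistent list $c_j$ already has score $0$ whether or not it fires on anything in $S_j$, and if $T$ has already been selected then $S_j=\emptyset$; your care about $M$ versus $M+1$ and about conditioning on the history covers points the paper's own proof glosses over.
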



\begin{proof}
By Proposition~\ref{proposition:consistent k-decision list}, every iteration maintains the consistency between the remaining sample $S$ and some decision list with a feature set of size $M-1$. This implies every iteration $j$ there is a pair of Boolean function $f \in \mathcal{F}_{j}$ and Boolean $b$ such that either $\#_{f \rightarrow 1}(S^1) = 0$ or $\#_{f \rightarrow 1}(S^0) = 0$. By Proposition~\ref{proposition: Exponential Mechanism Utility}, for every iteration $j$, the mechanism outputs a pair $(f_j, b_j)$ that satisfies for all $t\geq0$,

\[\Pr[q(S_j, (f_j, b_j)) \ge \underset{(f, b) \in \mathcal{F}_{j} \times \{0,1\}}{\max}\{q(S_j,(f, b))\} - \frac{2}{\varepsilon}(\log(|\mathcal{F}_{j}\times\{0,1\}|)+ t)] \leq e^{-t}.\]

Let $t = \log(\frac{M}{\beta})$. Then, by union-bounding over the $M$ iterations and taking the complement, we have 

\begin{align*}
\Pr\Biggl[\, & \bigcap_{j=1}^{M} \Biggl(
      q(S_j, (f_j,b_j)) \ge 
      \max_{(f,b)\in \mathcal{F}_j\times\{0,1\}} q(S_j,(f,b)) -\,\frac{2}{\varepsilon}\Biggl(
            \log\Bigl(|\mathcal{F}_j\times\{0,1\}|\Bigr)
            + \log\Bigl(\frac{M}{\beta}\Bigr)
      \Biggr)
\Biggr) \,\Biggr] \\
&\ge 1 - M\,e^{-\log\Bigl(\frac{M}{\beta}\Bigr)} \\
&= 1-\beta.
\end{align*}

Furthermore, since there exists a pair $(f,b) \in \mathcal{F}_{j}\times\{0,1\}$ such that either $\#_{f \rightarrow 1}(S^1) = 0$ or $\#_{f \rightarrow 1}(S^0)  = 0$, we have



\[
\Pr[\bigcap_{j = 1}^{M} -q(S_j, (f_j,b_j)) \leq \frac{2}{\varepsilon}(\log(|\mathcal{F}_{j} \times\{0,1\}|)+ \log(\frac{M}{\beta}))] \geq 1 - \beta
\]
\[
\implies \Pr[\bigcap_{j = 1}^{M} \underset{f_j \rightarrow 1}{\#}(S_{j}^{1-b_j}) \leq \frac{4}{\varepsilon}(\log(\sqrt{\frac{2}{\beta}}M))] \geq 1-\beta
\]

In every iteration $j$, the selected pair of Boolean function and Boolean $(f_j,b_j)$ misclassifies at most $\frac{4}{\varepsilon}(\log({\sqrt{\frac{2}{\beta}}M}))$ examples from $S_j$ with probability at least $1-\beta$. Furthermore, since the algorithm exhausts all candidate functions by the end of iteration $M$, all examples from $S$ have been classified. Therefore, $h_{fin}$ errs on at most $\frac{4M}{\varepsilon}(\log({\sqrt{\frac{2}{\beta}}M}))$ examples of $S$ with probability at least $1-\beta$.
\end{proof}

Having established that our learner enjoys low empirical error with respect to its sample, we next quote the VC dimension generalization bound we need to translate this into low population error.

\begin{theorem}(VC-Dimension Generalization Bound\label{theorem: VC Dimension Generalization Bound}~\citep{BlumerEHW89})
Let $\mathcal{D}$ and $\mathcal{C}$ be a distribution and concept class over domain $\mathcal{X}$ respectively, and let $c \in \mathcal{C}$. For sample $S = \{(x_i, c(x_i))\}_{i = 1}^{n}$ of size $ n \geq \frac{64}{\alpha}(VC(\mathcal{C})\log(\frac{64}{\alpha}) + \log(\frac{16}{\beta}))$ where $x_i$ are chosen i.i.d from $\mathcal{D}$, it holds that
$$\Pr\limits_{S \sim \mathcal{D}}[\exists h \in \mathcal{C} \textrm{ s.t. } err_\mathcal{D}(h, c) > \alpha, err_S(h) \le \frac{\alpha}{2} ] \leq \beta$$
\end{theorem}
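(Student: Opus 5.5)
This is a classical result, and I would prove it with the standard double-sample (``ghost sample'') argument of \citet{BlumerEHW89}. The approach has three moves: symmetrization to a second sample, a swapping argument that reduces the infinite class $\mathcal{C}$ to its finitely many behaviors on $2n$ points, and a union bound controlled by the Sauer--Shelah lemma. Write $d = VC(\mathcal{C})$, and let $B$ be the bad event that some $h \in \mathcal{C}$ has $err_\mathcal{D}(h,c) > \alpha$ but $err_S(h) \le \alpha/2$.

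\emph{Step 1 (symmetrization).} Draw an independent ghost sample $S'$ of $n$ points from $\mathcal{D}$. Let $B'$ be the event that some $h \in \mathcal{C}$ has $err_S(h) \le \alpha/2$ and $err_{S'}(h) \ge 3\alpha/4$. Condition on $S$ with $B$ occurring and fix a witness $h$, chosen as a function of $S$ only.
\begin{itemize}
\item Since $err_\mathcal{D}(h,c) > \alpha$, the count $n \cdot err_{S'}(h)$ is a binomial with mean greater than $\alpha n$.
\item A multiplicative Chernoff bound gives $\Pr[err_{S'}(h) < 3\alpha/4] \le \exp(-\alpha n/32) \le 1/2$, using $n \ge 64/\alpha$.
\item Hence $\Pr[B] \le 2\Pr[B']$.
\end{itemize}

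\emph{Step 2 (swapping).} Draw the $2n$ points first and then pair them up randomly: for each $i$, the $i$-th point of $S$ and the $i$-th point of $S'$ are exchanged independently with probability $1/2$. This is distributionally identical to drawing $S$ and $S'$ directly.
\begin{itemize}
\item Condition on the multiset of $2n$ points. The event $B'$ depends on $h$ only through the pattern of errors $h \oplus c$ on these points.
\item By Sauer--Shelah, there are at most $\Pi_\mathcal{C}(2n) \le (2en/d)^d$ such patterns.
\item Fix one pattern with $r \ge 3\alpha n/4$ total errors. For $B'$ to hold with this pattern, the random swaps must place at most $\alpha n/2$ of the errors in $S$. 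The error-count gap between $S'$ and $S$ is then at least $\alpha n/4$.
\item Each error's side is decided by a fair coin, up to pairing effects where both members of a pair are errors, which only help. Hoeffding then bounds the probability of this split by $\exp(-c\,\alpha n)$ for an absolute constant $c$, roughly $1/32$.
\item A union bound over patterns gives $\Pr[B'] \le (2en/d)^d e^{-\alpha n/32}$.
\end{itemize}

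\emph{Step 3 (arithmetic).} It remains to show $2(2en/d)^d e^{-\alpha n/32} \le \beta$ whenever $n \ge \frac{64}{\alpha}\bigl(d\log\frac{64}{\alpha} + \log\frac{16}{\beta}\bigr)$. I would split the requirement as
\[
\frac{\alpha n}{64} \ge \log\frac{2}{\beta} \qquad\text{and}\qquad \frac{\alpha n}{64} \ge d\log\frac{2en}{d}.
\]
The first holds directly from the assumed lower bound on $n$. The second has $n$ on both sides, so I would handle it with the elementary inequality $\log x \le x/a + \log(a/e)$ applied with $a$ of order $1/\alpha$. This is where the $d\log(64/\alpha)$ term in the assumption is used.

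I expect the main obstacle to be bookkeeping in the constants rather than any conceptual step. The multiplicative Chernoff bound in Step~1, the swap tail bound in Step~2, and the self-referential $\log n$ term in Step~3 must fit together to produce exactly the constants $64$ and $16$. If they do not align on a first attempt, I would loosen the intermediate thresholds, for example replacing $3\alpha/4$ with $5\alpha/8$, and absorb the slack into the stated constants.
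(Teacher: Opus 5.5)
The paper does not prove this statement; it quotes it from \citet{BlumerEHW89}. Your symmetrization, swapping and Sauer--Shelah structure is the standard argument behind that citation. But the statement fixes the constants $64$ and $16$, so the constant bookkeeping is the real content here, and as written it does not close. Two of your specific claims are false.

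First, the per-pattern swap bound $e^{-\alpha n/32}$ in Step~2 fails for your threshold $3\alpha/4$. Take a pattern with no doubly-erroneous pairs and $a=5\alpha n/4$ singly-erroneous pairs. Then $B'$ is exactly the event $\mathrm{Bin}(5\alpha n/4,1/2)\le \alpha n/2$. Its probability is $e^{-(c_0+o(1))\alpha n}$ with $c_0=\frac54\mathrm{KL}(\frac25\,\|\,\frac12)\approx 0.025<1/32$. Relatedly, Hoeffding applied only to the gap $\alpha n/4$ gives $\exp(-\alpha^2n^2/(32a))$, which is useless when $a\gg\alpha n$. You must also use the constraint of at most $\alpha n/2$ errors in $S$, and optimizing over $a$ then gives exponent $1/40$, not $1/32$.

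Second, the even split in Step~3 cannot work. At $n=n_0$ you have $\alpha n_0/64=d\log(64/\alpha)+\log(16/\beta)$, while $d\log(2en_0/d)\ge d\log(64/\alpha)+d\log(2e\log(64/\alpha))$. So your second inequality fails whenever $d$ is large compared with $\log(16/\beta)$, and almost all of the exponent must go to the growth-function term. With exponent $1/40$, even the best split fails. For $\alpha=\beta=1/2$, $d=100$ and $n=n_0\approx 62{,}550$, one gets $\alpha n_0/40\approx 782<d\log(2en_0/d)\approx 813$, so the union bound exceeds $1$.

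Your fallback of lowering the ghost threshold to $5\alpha/8$ goes the wrong way. Step~1 already has large slack, and Step~2 is the bottleneck: its exponent would drop to about $1/144$. Raise the threshold instead, say to $9\alpha/10$.
\begin{itemize}
\item Step~1 still yields the factor $2$: Chernoff gives $\exp(-\alpha n/200)\le 1/2$ once $\alpha n\ge 139$, and the hypothesis forces $\alpha n\ge 64\log 16$.
\item In Step~2 the event forces $X\le\min(a/2-\alpha n/5,\ \alpha n/2)$ for $X\sim\mathrm{Bin}(a,1/2)$. Applying Hoeffding separately for $a\le 7\alpha n/5$ and $a\ge 7\alpha n/5$ gives $e^{-2\alpha n/35}$ uniformly in $a$.
\item In Step~3, since $64\cdot 2/35>3.6$, the slack $\frac{2}{35}\alpha n-d\log(2en/d)-\log(2/\beta)$ at $n_0$ is at least $d\bigl(2.6\log\frac{64}{\alpha}-\log(2e\log\frac{64}{\alpha})\bigr)+2\log\frac{16}{\beta}>0$.
\item The slack is increasing in $n$ for $n\ge 35d/(2\alpha)$, and $n_0$ exceeds that.
\end{itemize}
With these changes your argument establishes the statement with the stated constants.
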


\begin{proof}[Proof of Theorem~\ref{theorem: zCDP DP-GreedyCover}]\label{proof: DP-GreedyCover Privacy and Sample Complexity}

Let algorithm $A$ denote $\mathsf{DP\text{-}GreedyCover}$ with the Exponential Mechanism instantiated with $\hat{\varepsilon} = \frac{\varepsilon}{2(\log(\frac{1}{\delta}) + \frac{3}{2})}$. By Claim~\ref{lemma:Quality constraint implies DP}, $A$ is $(\varepsilon, \delta)$-DP.

Assume that the number of samples $n$ is at least $\frac{64}{\alpha}(VC(\mathcal{C})\log(\frac{64}{\alpha}) + \log(\frac{16}{\beta}))$. By Lemma~\ref{lemma: DP-GreedyCover Empirical err}
$\mathsf{DP\text{-}GreedyCover}$ obtains generalization error at most $\frac{8M\log(\frac{2M}{\sqrt{\beta}})}{\varepsilon n}$ with probability at least $1-\beta$. Then, we compute the sample complexity bound

\begin{align*}
n \ge \frac{8M\log(\frac{2M}{\sqrt{\beta}})}{\hat{\varepsilon}\alpha}= \frac{8M\log(\frac{2M}{\sqrt{\beta}})(2\log(\frac{1}{\delta})+\frac{3}{2})}{\alpha \varepsilon}.
\end{align*}
\end{proof}

\subsection{Application of $\mathsf{DP\text{-}GreedyCover}$ to Learning $k$-Decision Lists}

In this section, we illustrate using $\mathsf{DP\text{-}GreedyCover}$ to learn the class of $k$-decision lists, where each term is a conjunction of at most $k$ literals over $d$ Boolean variables. We denote the set of conjunctions of at most $k$ literals over $d$ Boolean variables as $\mathcal{C}_{d}^k$. 

\begin{corollary}\label{corollary: DP-GreedyCover k-decision list}
There exists a $(\varepsilon, \delta)$-DP $(\alpha,\beta)$ PAC learner for the concept class $k$-decision lists with sample complexity 
$$ n = O\left( \frac{d^k \log(d^k/\beta)\cdot \log(\frac{1}{\delta})}{\alpha \varepsilon}\right)$$
\end{corollary}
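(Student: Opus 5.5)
The plan is to instantiate Theorem~\ref{theorem: zCDP DP-GreedyCover} with the feature class $\mathcal{F} = \mathcal{C}_d^k$, the set of conjunctions of at most $k$ literals over $d$ variables. A $k$-decision list is exactly an $\mathcal{F}$-decision list for this $\mathcal{F}$, so the whole argument reduces to bounding two quantities: the size $M = |\mathcal{C}_d^k|$, and the VC dimension of the resulting class $\mathcal{C}$ of $\mathcal{C}_d^k$-decision lists. Both are then substituted into the sample complexity bound of Theorem~\ref{theorem: zCDP DP-GreedyCover}. The same theorem supplies $(\varepsilon,\delta)$-privacy and $\operatorname{poly}(M) = \operatorname{poly}(d^k)$ runtime, since $\mathcal{C}_d^k$ can be enumerated explicitly.

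\textbf{Step 1: bound $M$.} Each conjunction of at most $k$ literals corresponds to choosing at most $k$ of the $2d$ literals. Hence
\[M \le \sum_{i=0}^{k}\binom{2d}{i} \le (2d+1)^k = O(d^k)\]
for constant $k$, where the constant is absorbed as $2^k$. This gives $\log M = O(k\log d) = O(\log d^k)$.

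\textbf{Step 2: bound $VC(\mathcal{C})$.} I use the remark after Theorem~\ref{theorem: zCDP DP-GreedyCover} that the VC dimension is at most $\log|\mathcal{C}|$. An $\mathcal{F}$-decision list is an ordered sequence of distinct features, each paired with a bit, followed by a default bit. Counting these gives
\[|\mathcal{C}| \le \sum_{r\le M} M^r 2^{r+1} \le (4M)^{M+1},\]
so $VC(\mathcal{C}) \le O(M\log M) = O(d^k \log d^k)$. For constant $k$ this could be sharpened to $O(d^k)$ using Rivest's bound, but that is unnecessary.

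\textbf{Step 3: combine.} The first term of the maximum in Theorem~\ref{theorem: zCDP DP-GreedyCover} is
\[O\!\left(\frac{d^k\log d^k \log(1/\alpha) + \log(1/\beta)}{\alpha}\right).\]
The second term is
\[O\!\left(\frac{M\log(M/\beta)\log(1/\delta)}{\alpha\varepsilon}\right) = O\!\left(\frac{d^k\log(d^k/\beta)\log(1/\delta)}{\alpha\varepsilon}\right).\]

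\textbf{Main obstacle.} The only delicate point is reconciling the VC term with the claimed bound. The $\log(1/\alpha)$ factor and the extra $\log d^k$ factor are not dominated by the second term unless one assumes $\varepsilon \le 1$ and $\delta$ is bounded away from $1$, so that $\log(1/\delta)/\varepsilon \ge \Omega(1)$. Even then, the $\log(1/\alpha)$ factor does not disappear. I would handle this by treating the stated $O(\cdot)$ as suppressing the $\log(1/\alpha)$ factor, or as assuming the parameter regime in which the private term dominates, which is the regime of interest. Failing that, I would cite a tighter VC bound of $O(d^k)$ together with the standard $O((VC + \log(1/\beta))/\alpha)$ up-to-log generalization bound. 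Under these conventions the maximum is dominated by the second term, which yields the corollary.
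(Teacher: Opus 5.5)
Your route is the paper's: instantiate Theorem~\ref{theorem: zCDP DP-GreedyCover} with $\mathcal{F}=\mathcal{C}_d^k$ and bound $M$. The paper's proof consists only of Proposition~\ref{proposition:Cardinality of Set of Sparse Conjunctions}, $M\le e^2d^k$, obtained from $\binom{d}{j}\le d^j/j!$ and $\sum_j 2^j/j!\le e^2$. That bound is uniform in $k$. Your $(2d+1)^k$ hides a factor as large as $3^k$, so it gives $O(d^k)$ only for constant $k$; bounding $\binom{2d}{i}\le(2d)^i/i!$ in your own sum would also give $e^2d^k$. The paper never addresses the VC term of the maximum, so the obstacle you flag is present, unacknowledged, in its argument too.

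That obstacle is genuine. With Theorem~\ref{theorem: zCDP DP-GreedyCover} as a black box, a term of order $VC(\mathcal{C})\log(1/\alpha)/\alpha$ remains. Your resolutions do not close it. Declaring that $O(\cdot)$ hides $\log(1/\alpha)$ changes the statement. A VC-based uniform bound over all hypotheses with empirical error at most $\alpha/2$ does not in general avoid the $\log(1/\alpha)$ factor. (Also, $VC(\mathcal{C})=O(d^k)$ comes from writing an $\mathcal{F}$-decision list as a halfspace over the $M$ feature values, giving $VC\le M+1$; Rivest's cardinality count only gives $O(d^k\log d)$.)

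The fix uses what you already computed in Step~2: every output lies in a finite class $\mathcal{H}$ with $\ln|\mathcal{H}|=O(M\log M)$. Inside the proof of Theorem~\ref{theorem: zCDP DP-GreedyCover}, replace Theorem~\ref{theorem: VC Dimension Generalization Bound} by an Occam bound. For a fixed $h$ with $err_{\mathcal{D}}(h,c)>\alpha$, the multiplicative Chernoff bound gives $\Pr[err_S(h)\le\alpha/2]\le e^{-n\alpha/8}$. A union bound over $\mathcal{H}$ then shows that $n\ge\frac{8}{\alpha}(\ln|\mathcal{H}|+\ln(1/\beta))=O\bigl((M\log M+\log(1/\beta))/\alpha\bigr)$ suffices.

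Since $M\log(2M/\sqrt{\beta})=M\log(2M)+\frac{M}{2}\log(1/\beta)$, this is at most a constant times the private term $\frac{8M\log(2M/\sqrt{\beta})(2\log(1/\delta)+3/2)}{\alpha\varepsilon}$ whenever $\varepsilon\le 1$. If also $\delta\le 1/2$, the maximum is $O\bigl(d^k\log(d^k/\beta)\log(1/\delta)/(\alpha\varepsilon)\bigr)$ exactly, with no $\log(1/\alpha)$. These are the conditions you identified, and they must be stated explicitly, since otherwise the claimed expression is not a valid sample bound. In effect, the Occam bound trades $\log(1/\alpha)$ for a $\log M$ factor that the private term already pays.
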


To see this, it suffices to show that the number of conjunctions with at exactly $k$ literals over $d$ Boolean variables is upper bounded by $e^{2}d^k$ in Proposition~\ref{proposition:Cardinality of Set of Sparse Conjunctions}.

\begin{proposition}\label{proposition:Cardinality of Set of Sparse Conjunctions}
$|\mathcal{C}_{d}^k| \leq e^2d^k$. 
\end{proposition}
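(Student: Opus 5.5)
We must bound $|\mathcal{C}_d^k|$, the number of conjunctions of at most $k$ literals over $d$ Boolean variables, by $e^2 d^k$. The plan is a direct count followed by a geometric-series bound.

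\textbf{Step 1: count conjunctions of a fixed size.} A conjunction of exactly $i$ literals (with no variable appearing twice; contradictory or repeated literals can only reduce the count) is specified by choosing a set of $i$ variables and a sign for each. Hence there are at most $\binom{d}{i}2^i$ of them, and
\[
|\mathcal{C}_d^k| \le \sum_{i=0}^{k}\binom{d}{i}2^i .
\]

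\textbf{Step 2: bound the binomial terms.} Using $\binom{d}{i}\le d^i/i!$ gives
\[
|\mathcal{C}_d^k| \le \sum_{i=0}^k \frac{(2d)^i}{i!}.
\]
Comparing each term with $d^k$, the $i$-th term is at most $d^k\cdot 2^i d^{i-k}/i!$. For $d\ge 2$ this is at most $d^k\cdot 2^i 2^{i-k}/i!$, so it suffices to show $\sum_{i\le k} 2^{i}2^{i-k}/i!\le e^2$.

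\textbf{Step 3: sum the series.} Write $2^i 2^{i-k}=2^i\cdot 2^{-(k-i)}\le 2^i$. Then
\[
\sum_{i\le k}\frac{2^i 2^{i-k}}{i!}\le\sum_{i\ge 0}\frac{2^i}{i!}=e^2,
\]
which gives $|\mathcal{C}_d^k|\le e^2 d^k$.

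\textbf{Edge cases and the main obstacle.} The case $d=1$ is checked directly: there are at most $3\le e^2$ conjunctions. The only delicate point is the factor $2^i$ coming from the literal signs; the argument above absorbs it into the exponential series. If the paper instead counts only conjunctions of exactly $k$ literals, the bound follows even more directly from $\binom{d}{k}2^k\le (2d)^k/k!$ together with the same comparison.
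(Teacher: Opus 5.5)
Your proof is correct and follows essentially the same route as the paper: count $\sum_{i\le k}2^i\binom{d}{i}$, bound each binomial coefficient by $d^k/i!$, and sum the exponential series to get $e^2$. The detour through $2^{i-k}$ and the separate $d=1$ case are unnecessary, since $d^{i-k}\le 1$ already holds for every $d\ge 1$. Your step $\binom{d}{i}\le d^i/i!$ is also slightly cleaner than the paper's falling-factorial step, which tacitly assumes $k\le d$.
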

\begin{proof}
\begin{align*}
|\mathcal{C}_{d}^k| &= \sum_{j = 0}^k \#\mathrm{\ conjunctions\ of\ width\ exactly } j \\
&= \sum_{j = 0}^k \#\mathrm{\ ways\ to\ choose\ } j \mathrm{\  variables } \times \#\mathrm{\ ways\ to\ choose\ sign\ pattern} \\
&= \sum_{j = 0}^k 2^j \binom{d}{j} \\
&\le \sum_{j = 0}^k \frac{2^j}{j!} (d) \cdot (d-1) \cdot ... \cdot( d-k+1) \\
&\le d^k \sum_{j = 0}^k \frac{2^j}{j!} \le d^k \sum_{j = 0}^{\infty} \frac{2^j}{j!} \le e^2d^k.
\end{align*}
\end{proof}

\section{Proof of Lemma~\ref{lemma:ConfidentWinnowUpdatesUpperBound}}
\label{appendix:confident-winnow}
\begin{proof}
For every $t \in [T]$, define the potential function $\phi^{(t)}$ to be the relative entropy between the components of the normalized target vector $v$ and the components of the hypothesis vector $w^{(t)}$: 
\[\phi^{(t)} = \sum_{i = 1}^d v_i \log\left(\frac{v_i}{w^{(t)}_i}\right).\]
Intuitively, this captures how close the hypothesis weight vector $w^{(t)}$ is to the target weight vector $v$.

Suppose the algorithm performs an update in round $t+1$. Let us compute the difference between $\phi^{(t+1)}$ and $\phi^{(t)}$, i.e, how much the potential decreases as a result of this update.

\begin{align*}
     \phi^{(t+1)} - \phi^{(t)} & = \sum_{i = 1}^d v_i \left(\log\left(\frac{v_i}{w^{(t+1)}_i}\right) - \log\left(\frac{v_i}{w^{(t)}_i}\right)\right) \\
    & = \sum_{i = 1}^d v_i \log\left(\frac{w^{(t)}_i}{w^{(t+1)}_i}\right) \\
    & = \sum_{i = 1}^d v_i \log\left(\frac{\sum_{k=1}^{d}w^{(t)}_k e^{\eta y^{(t)} x^{(t)}_k}}{e^{\eta y^{(t)} x^{(t)}_i}}\right) \\
    & = \sum_{i = 1}^d v_i\log\left(\sum_{k =1}^{d}w^{(t)}_k e^{\eta y^{(t)} x^{(t)}_k}\right) +
    \sum_{i = 1}^d v_i \log\left(\frac{1}{e^{\eta y^{(t)} x^{(t)}_i}}\right) \\
    & = \log\left(\sum_{k=1}^{d}w^{(t)}_k e^{\eta y^{(t)} x^{(t)}_k}\right) -
    \sum_{i = 1}^d v_i \eta y^{(t)} x^{(t)}_i \\
    & \le \log\left(\sum_{k =1}^{d}w^{(t)}_k e^{\eta y^{(t)} x^{(t)}_k}\right) -
    \eta \rho \\
    & = \log\left(\mathbb{E}_{k\sim w^{(t)}}[e^{\eta y^{(t)} x^{(t)}_k}]\right) -
    \eta \rho \\
    & = \log\left(\mathbb{E}_{k\sim w^{(t)}}[\exp(\eta y^{(t)} x^{(t)}_k - \eta y^{(t)} \langle w^{(t)}, x^{(t)} \rangle + \eta y^{(t)} \langle w^{(t)}, x^{(t)} \rangle)]\right) -
    \eta \rho \\
    & \le \log\left(\mathbb{E}_{k\sim w^{(t)}}[\exp(\eta^2 / 2)]\right) + \eta y^{(t)} \langle w^{(t)}, x^{(t)} \rangle -
    \eta \rho \\
    & \le \frac{\eta^2}{2} + \eta\rho(c-1).
\end{align*}

The first inequality uses the margin assumption that $\sum_{i =1}^{d} y^{(t)}v_ix^{(t)}_i \ge \rho$. The following two inequalities are derived by expressing the sum as an expectation over the distribution defined by $w^{(t)}$. The second inequality follows by applying Hoeffding's lemma to the random variable $x^{(t)}_{k} \in \{-1, 1\}$ where $k \sim w^{(t)}$:
\begin{align*}
    \mathbb{E}_{k\sim w^{(t)}}[\exp(\eta y^{(t)}(x^{(t)}_k - \langle w^{(t)}, x^{(t)}\rangle))]
    &= \mathbb{E}_{k\sim w^{(t)}}[\exp(\eta y^{(t)}(x_k^{(t)} -\mathbb{E}[x_k^{(t)}])]\\
    &= \exp(\frac{\eta^2 \cdot (1 - (-1))^2}{8}) = \exp(\frac{\eta^2}{2}).
\end{align*}
Finally, the last inequality uses the fact that updates may only occur when the prediction is unconfident or incorrect, i.e., when $y^{(t)}\langle w^{(t)}, x^{(t)}\rangle \le c\rho$. 

Let $M$ be the total number of updates the algorithm performs. Then summing up over all of the rounds in which an update occurs, we have 
    $$\phi_{T+1} - \phi_{1}\le M(\frac{\eta^2}{2} + \eta\rho(c-1)).$$
    
To derive an upper bound on $M$, observe that
\begin{enumerate}
    \item $\phi_{T+1} \ge 0$ since the relative entropy between two distributions is always nonnegative, and
    \item $\phi_{1} =  \sum_{i = 1}^d v_i\log(\frac{v_i}{1/d}) \le \log d$.
\end{enumerate}

By combining these inequalities, we obtain the stated upper bound on $M$.
\begin{align*}
&-\log d \le M(\frac{\eta^2}{2} + \eta\rho(c-1))\\
\implies&
 M \le \frac{\log d}{(1-c)\eta\rho - \frac{\eta^2}{2}}.
\end{align*}
\end{proof}

\section{Proof of Theorem~\ref{thm:winnow-privacy}} \label{appendix:online-privacy}

Our $\mathsf{DP\text{-}Winnow}$ algorithm is perhaps most naturally viewed as a concurrent composition between two interactive differentially private algorithms, described at a high-level as follows.

\paragraph{View 1: $\mathsf{DP\text{-}Winnow}$ as a concurrent composition.}{
Run the following two algorithms concurrently:
\begin{enumerate}
    \item Sequentially compose at most $K$ invocations of the $\mathsf{AboveThreshold}$ algorithm to reveal the rounds in which an update to the weight vector should occur.
    \item Maintain the weight vector $w$, and whenever $\mathsf{AboveThreshold}$ triggers an update, perform it and release $m$ independent samples from the Exponential Mechanism.
\end{enumerate}
}

This view of the algorithm allows one to use concurrent composition theorems for differential privacy~\citep{VadhanW21, VadhanZ23} to reason about its privacy guarantees. However, it is possible to give a simpler analysis by studying the following equivalent algorithm that just post-processes a sequential composition of $\mathsf{AboveThreshold}$ instances with the Exponential Mechanism.

\paragraph{View 2: Recompute weights from history}{
Consider the following algorithm whose output distribution is identically distributed to that of Algorithm~\ref{alg: DP-WinnowHalfSpace}, but does so with the following changes:
\begin{enumerate}
    \item At the end of each round $t$, it discards its current weights $w^{(t)}$.
    \item If in a round $t$, the above-threshold check passes in line 15, it recomputes $w^{(t)}$ from scratch by rerunning the same multiplicative weight updates on the prefix $(x^{(1)}, y^{(1)}), \dots, (x^{(t)}, y^{(t)})$ that $\mathsf{DP\text{-}Winnow}$ would have performed given the history of released approximate weight vectors $\tilde{w}^{(1)}, \dots, 
    \tilde{w}^{(t-1)}$.
    \item It releases $\tilde{w}^{(t)}$ either by sampling from $w^{(t)}$ (i.e., using the Exponential Mechanism) or by keeping $\tilde{w}^{(t-1)}$, as appropriate.
\end{enumerate}
}

\begin{proof}

Since this alternative implementation of the algorithm does not maintain private state between $\mathsf{AboveThreshold}$ and Exponential Mechanism invocations, it can be viewed as the sequential composition of at most $K$ applications of $\mathsf{AboveThreshold}$,  each $(\hat{\varepsilon}, 0)$-DP, and $mK$ Exponential Mechanism invocations, each $(2\eta)$-DP.

To see the latter, let us focus on the computation performed by $\mathsf{DP\text{-}Winnow}$ between lines \ref{alg:DP-Winnow-weights-update-start} and \ref{alg:DP-Winnow-weights-update-end}, and fix a history of the algorithm's outputs $\tilde{w}^{(1)}, \dots, \tilde{w}^{(t)}$ and updates. We claim that each sample $j_i$ in line~\ref{alg:DP-Winnow-weights-sample} corresponds to an invocation of the Exponential Mechanism on a sensitivity-1 score function with parameter $\eta$. To see this, observe that we can unroll the numerator of each entry $w_j$ of the weight vector as $\exp(\sum_{i \in R} \eta \hat{y}^{(i)} \hat{x}^{(i)}_j)$ where $R$ is the set of indices of examples which were used to perform updates. For any fixed history of weights and updates, and for any pair of neighboring example sequences, there is at most one index in $R$ and at most one corresponding term in the sum that differ. Thus, drawing a single sample $j_i$ is $(2\eta)$-differentially private. 

Let $\varepsilon_{AT}$ and  $\varepsilon_{EM}$ be the privacy loss contributed by $\mathsf{AboveThreshold}$ and Exponential Mechanism calls. Applying the advanced composition theorem~\citep{DworkR14} with failure probability split evenly between the two layers ($\delta/2$ each), one obtains
\begin{enumerate}
\item $\varepsilon_{AT} \le \hat{\varepsilon}\sqrt{2K\log(2/\delta)} + K\hat{\varepsilon}^2$, and
\item $\varepsilon_{EM} \le 2\eta\sqrt{2mK\log(2/\delta)} + 4mK\eta^2$.
\end{enumerate}
Setting 
\[\hat{\varepsilon} = \varepsilon/(4\sqrt{2K \log (2/\delta)}), \qquad \eta = \varepsilon/(8\sqrt{2mK\log(2/\delta)})
\]
yields


\[\varepsilon_{AT} + \varepsilon_{EM} = \hat{\varepsilon}\sqrt{2K\log(2/\delta)} + 2\eta\sqrt{2mK\log(2/\delta)} + K\hat{\varepsilon}^2 + 4mK\eta^2\]
\[\le \varepsilon/4 + \varepsilon/4 + \varepsilon^2/(16\log(2/\delta)) < \varepsilon/2+ \varepsilon/8
< \varepsilon\]
 for all $\varepsilon, \delta \in (0,1)$ and gives the stated bound. 
\end{proof}

\section{Proof of Lemma~\ref{lemma: high probability argument matching shadow and approximate weight vectors}}\label{appendix:winnow-match}
\begin{proof}
Fix a time step $t \in [T]$. Our goal is to show that if $w$ is any shadow weight vector with $\|w\|_1 = 1$ and $x \in \{-1, 1\}^d$, then
\begin{align*}
    \langle w, x\rangle \ge c\rho &\implies \langle \tilde{w}, x\rangle > 0 \\
    \langle w, x\rangle \le -c\rho &\implies \langle \tilde{w}, x\rangle < 0
\end{align*}
with probability at least $1 - \beta/T$ over the sampling of $\tilde{w}$.

For each $i \in [m]$, let $J_i$ be the $i$'th index sampled. That is, for each index $j \in [d]$, we have $\Pr[J_i = j] = w_j$. For each $i \in [m]$, define the random variable $\chi_i = x_{J_i}$, which is $\pm 1$-valued with $\mathbb{E}[\chi_i] = \sum_{j = 1}^d x_j \Pr[J_i = j] = \langle w, x \rangle$.


For each $j \in [d]$, let $\tilde{w}_j = \frac{1}{m} \sum_{i = 1}^m \mathbf{1}[J_i = j]$ be the fraction of times $j$ is sampled, and let $\tilde{w} = (\tilde{w}_1, \dots, \tilde{w}_d)$.  Then

\begin{align*}
    \langle \tilde{w}, x\rangle &= \frac{1}{m} \sum_{j = 1}^d x_j \sum_{i=1}^m \mathbf{1}[J_i = j] \\
    &= \frac{1}{m} \sum_{i = 1}^m x_{J_i} \\
    &= \frac{1}{m} \sum_{i = 1}^m \chi_{i}.
\end{align*}

Thus, we can estimate the probability of the bad event where $\langle \tilde{w}, x \rangle$ disagrees in sign with $\langle w, x \rangle$ as
\begin{align*}
    \Pr[|\langle \tilde{w},x\rangle - \langle w,x\rangle| > |\langle w,x \rangle|] &\le     \Pr[|\langle \tilde{w},x\rangle - \langle w,x\rangle| > c \rho]\\
    &=\Pr\left[\left|\frac{1}{m}\left(\sum_{i =1}^{m}\chi_i - \sum_{i=1}^{m}\mathbb{E}[\chi_i]\right)\right| > c\rho \right]\\
    &= \Pr\left[\left|\sum_{i =1}^{m}\chi_i - \sum_{i=1}^{m}\mathbb{E}[\chi_i]\right| >mc\rho \right]\\ 
    &\le 2\exp\left(-\frac{mc^2\rho^2}{2}\right)\\ 
    &\le \frac{\beta}{T}
\end{align*}

The first inequality uses our confidence assumption that $|\langle w,x \rangle| \ge c\rho$. The second inequality follows from Hoeffding's inequality. We derive the last inequality by setting $m \ge \frac{2}{c^2\rho^2}\log(\frac{2T}{\beta})$.
\end{proof}

\end{document}